\documentclass[11pt]{article}

\usepackage[final]{acl}

\usepackage{times}
\usepackage{latexsym}

\usepackage[T1]{fontenc}
\usepackage[utf8]{inputenc}

\usepackage{microtype}

\usepackage{inconsolata}

\usepackage{graphicx}

\usepackage{amsmath}
\usepackage{amssymb}
\usepackage{mathtools}
\usepackage{amsthm}
\usepackage[noabbrev,capitalize,nameinlink]{cleveref}
\theoremstyle{plain}
\newtheorem{theorem}{Theorem}[section]

\newtheorem{proposition}[theorem]{Proposition}

\theoremstyle{definition}

\theoremstyle{remark}

\usepackage{algorithm}
\usepackage{algorithmic}
\usepackage[algo2e]{algorithm2e}
\usepackage[utf8]{inputenc}
\usepackage[cjk]{kotex}
\usepackage{wrapfig}
\usepackage{subfigure}
\usepackage{multirow}
\usepackage{pbox}
\usepackage{graphicx}
\usepackage{bm}
\usepackage{siunitx}
\usepackage{booktabs}
\usepackage{lipsum}
\usepackage{caption}
\usepackage{cancel}
\usepackage[toc,page,header]{appendix}
\usepackage{minitoc}

\usepackage[textsize=tiny]{todonotes}
\usepackage{marginnote}
 
\usepackage{listings}
\definecolor{codegreen}{rgb}{0,0.6,0}
\definecolor{codegray}{rgb}{0.5,0.5,0.5}
\definecolor{codepurple}{rgb}{0.58,0,0.82}
\lstdefinestyle{mystyle}{
    commentstyle=\color{codegreen},
    keywordstyle=\color{magenta},
    numberstyle=\tiny\color{codegray},
    stringstyle=\color{codepurple},
    basicstyle=\ttfamily\footnotesize,
    breakatwhitespace=false,         
    breaklines=true,                 
    captionpos=b,                    
    keepspaces=true,                 
    numbers=left,                    
    numbersep=5pt,
    showspaces=false,                
    showstringspaces=false,
    showtabs=false,                  
    tabsize=2
}
\definecolor{diffstart}{named}{gray}
\definecolor{diffincl}{named}{green}
\definecolor{diffrem}{named}{orange }
\lstdefinelanguage{diff}{
basicstyle=\ttfamily\small,
morecomment=[f][\color{diffstart}]{@@},
morecomment=[f][\color{diffincl}]{+\ },
morecomment=[f][\color{diffrem}]{-\ },
}
\definecolor{bluegray}{rgb}{0.4, 0.6, 0.8}
\definecolor{electriclime}{rgb}{0.8, 1.0, 0.0}
\definecolor{malachite}{rgb}{0.04, 0.85, 0.32}
\definecolor{darkred}{rgb}{0.55, 0.0, 0.0}
\definecolor{darkblue}{rgb}{0.0, 0.0, 0.55}
\definecolor{darkgreen}{rgb}{0.0, 0.2, 0.13}
\definecolor{darkorchid}{rgb}{0.6, 0.2, 0.8}

\crefname{proposition}{proposition}{propositions}
\Crefname{proposition}{Proposition}{Propositions}
\Crefname{figure}{Fig.}{Figs.}% {<type>}{<singular>}{<plural>}
\definecolor{commentcolor}{RGB}{110,154,155}   % define comment color
\usepackage[table]{xcolor}
\title{Can Spectral-Clipping Enable Better Learning\\While Forgetting Less for Low-Rank Adaptation?}

\author{
 \textbf{Hyowon Wi\textsuperscript{1}} and
 \textbf{Noseong Park\textsuperscript{1}}
\\
 \textsuperscript{1}Korea Advanced Institute of Science and Technology (KAIST)
\\
 \small{
   \texttt{\{hyowon.wi,noseong\}@kaist.ac.kr}
 }
}

\begin{document}
\maketitle
\begin{abstract}
In recent years, low-rank adaptation (LoRA) has emerged as a significant paradigm that freezes pre-trained weights and introduces small, learnable adapters instead of fine-tuning the full set of parameters. In this work, we uncover several key insights regarding the \textit{singular} components of network parameters based on Singular Value Decomposition (SVD).
Firstly, the \textit{principal} singular components with large singular values in pre-trained network parameters can be effectively reused during fine-tuning, whereas the \textit{minor} components with smaller singular values are more task-specific and require substantial adaptation. Secondly, we first establish the theoretical connection that the uncontrolled growth of singular values in LoRA adapters leads to the forgetting of pre-trained knowledge — a well-known issue referred to as \textit{catastrophic forgetting}.
Building on these observations, we propose \textbf{SCLoRA}, which injects parameterized singular components with spectral clipping into the pre-trained model in a way that is aware of the spectral distribution of the pre-trained model. \textbf{SCLoRA} effectively adapts to new tasks by focusing updates on components that require adaptation, while simultaneously alleviating catastrophic forgetting. We conduct extensive experiments and demonstrate that \textbf{SCLoRA} not only improves downstream performance but also effectively retains pre-trained knowledge.

\end{abstract}

\section{Introduction}\label{sec:introduction}

Pre-trained language models (PLMs) have achieved remarkable performance in various natural language processing tasks~\citep{devlin2019bert,liu2019roberta,lan2019albert,he2020deberta,touvron2023llama,achiam2023gpt4,anil2023palm}. The common way to adapt pre-trained language models to downstream tasks is \textit{fine-tuning}. However, fine-tuning all parameters and storing copies of the large model for each downstream task results in significant cost and memory consumption. To address this issue, recent studies suggest parameter-efficient fine-tuning (PEFT) methods~\citep{hu2021lora,zhang2023adalora,lialin2023relora,liu2024dora,jiang2024mora,meng2024pissa,wang2025milora}, fine-tuning with only a small number of parameters. Low-Rank Adaptation (LoRA)~\citep{hu2021lora}, a representative PEFT approaches that keeps the pre-trained weights frozen and updates only a small number of parameters, has shown promising performance while being both storage- and compute-efficient.
% LoRA is designed based on the assumption that pre-trained language models are inherently low-dimensional and can learn efficiently even with random projections into smaller subspaces. The low-rank matrices serve as adapters, amplifying features that were learned but not emphasized during pre-training. 
% over other methods such as prompt tuning~\citep{lester2021prompttuning} or prefix tuning~\citep{li2021prefixtuning}

\begin{figure*}[t]
    \centering
    \subfigure[Fisher overlap]{\includegraphics[width=0.33\textwidth]{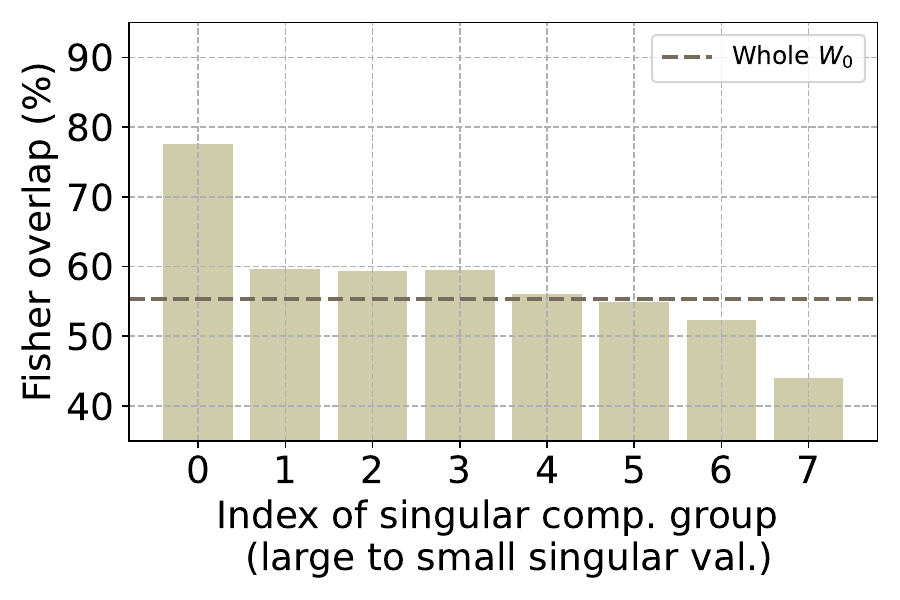}}\hfill
    \subfigure[Trade-off between spectral norm and pre-trained accuracy]{\includegraphics[width=0.345\textwidth]{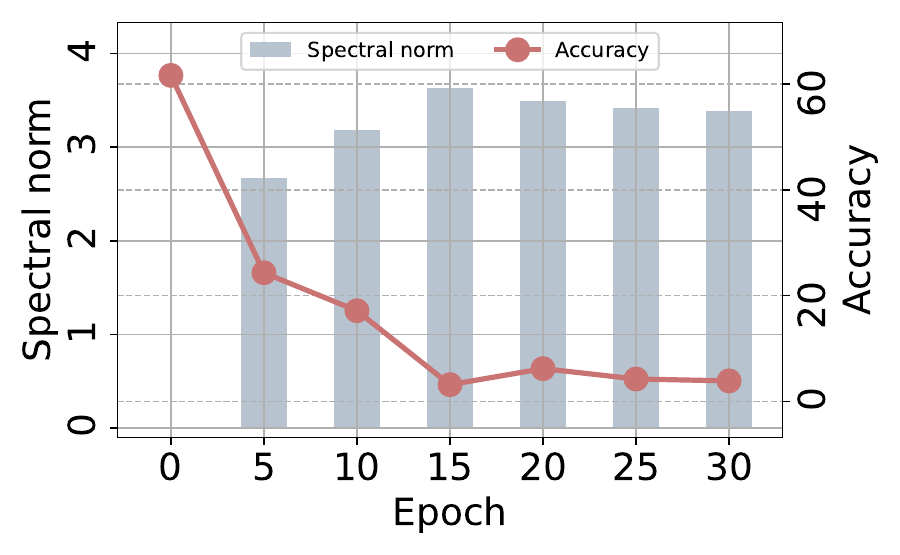}}\hfill
    \subfigure[Pre-trained accuracy trajectory]{\includegraphics[width=0.32\textwidth]{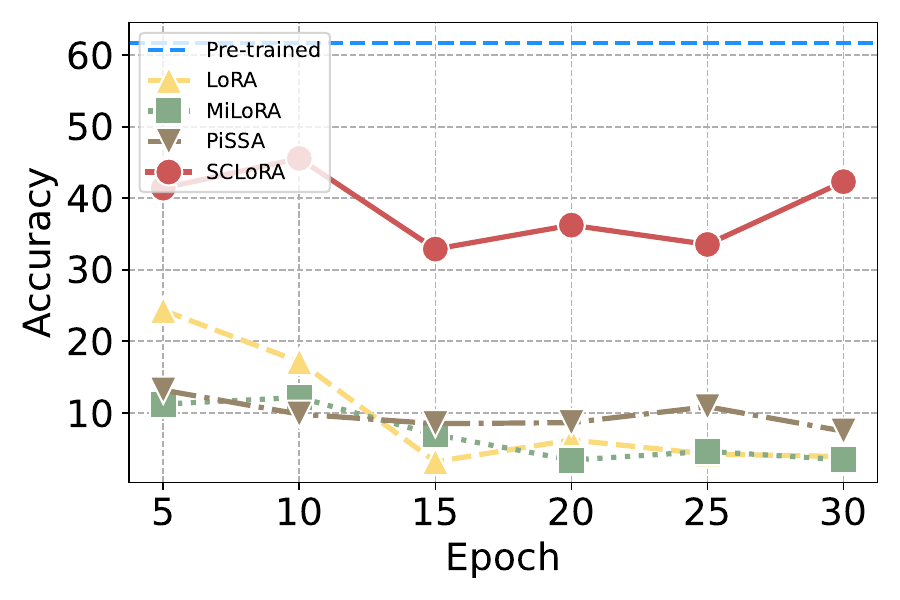}}
    % \jeongwhan{JC: Legend fontsize 작음} 
    \caption{(a) The Fisher overlap~\citep{kirkpatrick2017overcoming} between the pre-trained task (BookCorpus) and the fine-tuning task (MRPC), evaluated over partial reconstructions of the pre-trained network parameters obtained by grouping singular components sorted from large to small according to their singular values. Additional visualizations for other datasets are in \Cref{app:fisher_overlap_vis}. (b) The trade-off between the spectral norm of the adapter and the accuracy on the pre-trained task (BookCorpus) during fine-tuning of LoRA on the STS-B dataset from the GLUE benchmark for RoBERTa\textsubscript{base}. (c) Pre-trained accuracy trajectory across fine-tuning epochs with various baselines.}
    \label{fig:intro}
\end{figure*}

In recent years, many studies have investigated the properties of \textit{singular components} with Singular Value Decomposition (SVD) in LoRA~\citep{zhang2023adalora,meng2024pissa,wang2025milora,balazy2024loraxs,yang2024corda,wang2024loraga}. A singular component refers to a rank-1 matrix formed by the product of a pair of left and right singular vectors and their corresponding singular value. Specifically, a \textit{principal} singular component refers to one associated with a larger singular value, representing the global structure of the matrix~\citep{abdi2010principal, meng2024pissa}.
Conversely, a \textit{minor} singular component corresponds to a smaller singular value and is often considered as noise~\citep{wang2025milora}.
In deep learning, however, because learned weight matrices are typically full rank~\citep{hu2021lora,yu2023compressing,garg2025revealing}, the minor singular components are not merely noise; rather, they also encode detailed information within the matrix.

\paragraph{Motivations.} 
Recent studies have explored SVD-based variants of LoRA by selecting and modifying specific singular components~\citep{meng2024pissa,wang2025milora}. These approaches assume that principal components contain important information and minor components are merely noise. However, a principled understanding of how different singular components contribute to task adaptation and knowledge preservation is still missing.
To address this, we conduct a systematic analysis of LoRA through the lens of singular components. 
The first evidence is that the principal singular components of the pre-trained network parameters can be reused for the fine-tuning task to a great extent; the minor singular components become more task-specific and thus require a significant adaptation. 
To quantify spectrum-wise task alignment between pre-training and fine-tuning, we compute the \textit{Fisher overlap}~\citep{kirkpatrick2017overcoming, yao2022path, qian2024learn} on spectrally decomposed pre-trained parameters.
Specifically, we apply SVD, sort singular values in descending order, and group the corresponding singular components. Fisher overlap is then computed on partial reconstructions from each group, enabling alignment to be analyzed across spectral scales.
A higher Fisher overlap indicates stronger alignment, suggesting that the corresponding components are more transferable.
The detailed formulation of the Fisher overlap is provided in~\Cref{app:fisher_overlap_vis}.
\Cref{fig:intro} (a) shows that overlap gradually decreases as the singular components correspond to smaller singular values, suggesting that the principal singular components are already aligned, while minor singular components need more task-specific adaptation.

Beyond identifying where adaptation should occur, we further investigate how adaptation affects pretrained knowledge. 
Crucially, we establish the first theoretical connection showing that the growth of adapter singular values leads to the reduction of pretrained priors (see \Cref{thm:map}) during the fine-tuning optimization process. 
This reduction can result in a phenomenon called \textit{catastrophic forgetting}, where the model rapidly forgets pre-trained knowledge during fine-tuning. Such forgetting undermines the scalability and reliability of pre-trained models, making it essential to address this issue~\citep{wang2025milora,yang2024corda,ren2024analyzing,yang2024moral,dou2024loramoe}.
It is known that, however, during typical stochastic optimization, the spectral norm of weight matrices tends to grow rapidly (See \Cref{sec:theoretical_analysis} for details). Consequently, this norm growth leads to catastrophic forgetting in common fine-tuning settings. Empirically, \Cref{fig:intro}(b) shows that LoRA exhibits a significant increase in the singular values of its adapter during fine-tuning. As in~\Cref{fig:intro} (c), this increase is associated with performance degradation on the pre-trained task, indicating that LoRA is vulnerable to catastrophic forgetting. This phenomenon exposes a fundamental limitation of existing PEFT methods: unconstrained spectral growth results in significant forgetting.

\paragraph{Main idea.}
Motivated by these insights, we propose a \textbf{Lo}w-\textbf{R}ank \textbf{A}daptation with \textbf{S}pectral \textbf{C}lipping, called \textbf{SCLoRA}, a new PEFT framework that directs adaptation through the injection of spectrally clipped singular components, preventing the uncontrolled spectral growth observed in conventional updates.
\textbf{SCLoRA} constructs adapters as singular components through a parameterized SVD and constrains their singular values using a distribution-aware bound derived from the pretrained spectral distribution. This constraint prevents excessive spectral growth and guides learning toward the relatively minor regions of the spectrum. As shown in our analysis, this aligns updates with the spectral components that genuinely require adaptation, enabling effective downstream learning. Moreover, since our theoretical results indicate that increasing singular values reduces the pretrained prior, bounding them according to the pretrained spectrum helps preserve the prior knowledge encoded during pre-training, thereby mitigating catastrophic forgetting.
We conduct extensive experiments to evaluate the effectiveness of \textbf{SCLoRA}, demonstrating that it consistently outperforms LoRA and its variants across various tasks. Additionally, we assess catastrophic forgetting across multiple baseline models, showing that \textbf{SCLoRA} significantly mitigates the forgetting of pre-trained knowledge. Our key contributions can be summarized as follows:
\begin{itemize} 
    \item To our knowledge, we are the first to show that the minor singular components of network parameters require substantial adaptation, and we theoretically demonstrate that the growth of adapter singular values results in catastrophic forgetting.
    \item In \Cref{subsec:method}, we propose \textbf{SCLoRA}, which injects the singular components with spectral clipping using parameterized SVD, ensuring that the pre-trained model efficiently adapts to new tasks and mitigates the catastrophic forgetting problem.
    \item In \Cref{sec:experiments} and \Cref{sec:discussion}, extensive experiments demonstrate that \textbf{SCLoRA} efficiently adapts to the new task with performance gain and significant reduction in catastrophic forgetting across diverse tasks and model scales.
\end{itemize}

\section{Preliminaries \& Related Works}\label{sec:related_work}

% \subsection{Transformers}
% Transformers can be understood from two key submodules: multi-head attention (MHA) and feed-forward network (FFN). The MHA with $h$ parallel heads performs the attention function as follows:
% \begin{align}
%     \text{MHA}(X) = \text{Concat}(\text{head}_1, \dots, \text{head}_h) W_o ;  \quad \text{head}_i = \text{Softmax} \left( \frac{X W_{q_i} (X W_{k_i})^\top}{\sqrt{d_k}} \right) X W_{v_i},
% \end{align} where $W_o \in \mathbb{R}^{d \times d}$ is an output projection weight and $W_{q_i}, W_{k_i}, W_{v_i} \in \mathbb{R}^{d \times d_h}$ are query, key, and value projection weights for each head $i$. $d_h$ is typically set to $d/h$. FFN performs two linear transformations with a ReLU activation as follows:
% \begin{align}
%     \text{FFN}(X) = \text{ReLU}(XW_{f_1} + b_1)W_{f_2} + b_2,
% \end{align} where $W_{f_1} \in \mathbb{R}^{d \times d_m}$ and $W_{f_2} \in \mathbb{R}^{d_m \times d}$. These architectures enable a model to understand the language patterns and generate human-like texts in natural language processing.

\subsection{Low-Rank Adaptation and SVD}
LoRA~\citep{hu2021lora} suggests the low-rank update of the pre-trained weights by the product of two low-rank matrices. For $h=W_0x$, the modified forward pass becomes:
\begin{align}
    h=W_0x+ \Delta Wx =W_0x + BAx,
\end{align}
where $W_0,\Delta W \in \mathbb{R}^{d_1 \times d_2}$, $A \in \mathbb{R}^{r\times d_2}$ and $B \in \mathbb{R}^{d_1 \times r}$ with $r \ll \{d_1, d_2\}$. $A$ is initialized with a random Gaussian initialization and $B$ with zero, so $\Delta W=BA$ is initially zero at the beginning of training. After fine-tuning, the learnable adapter $\Delta W$ can be integrated into the pre-trained weight $W$ without modifying the original model architecture or adding any additional inference overhead.

\paragraph{SVD-based LoRA.}
% Recent studies have explicitly decomposed the network parameters using SVD to initialize adapters with a subset of components.
PiSSA~\citep{meng2024pissa} assumes that the principal components hold important information, decomposing the network parameters into principal and residual components using explicit SVD. Then the residual components are frozen, while the adapter is initialized with the principal components and directly updated. 
Conversely, MiLoRA~\citep{wang2025milora} proposes directly modifying the minor components of the pre-trained networks, assuming they are noisy and less important, in order to better preserve the pre-trained knowledge.
CorDA~\citep{yang2024corda} performs SVD on $W_0C$ with input activation covariance $C$ from  a small calibration set, to obtain a context-aligned low-rank decomposition. 
LoRA-GA~\citep{wang2024loraga} computes full-parameter gradients on a small sample set and applies SVD to obtain a low-rank subspace, initializing LoRA adapters so that their updates approximate the full-gradient direction.
Unlike existing initialization-based method, AdaLoRA~\citep{zhang2023adalora} adopt parameterized SVD to dynamically adjusts the rank for LoRA layer based on a sensitivity-driven importance score. They focus on pruning the number of ranks to meet a predefined budget using heuristic importance scores.
Despite their architectural differences, however, existing methods do not explicitly provide a principled way to determine where task-specific adaptation should occur or how pretrained knowledge should be preserved, nor do they control the spectral dynamics of the updates that govern this process.

% \paragraph{LoRA with \textit{parameterized} SVD.}
% \RED{Recent studies design LoRA based on the parameterized SVD, which differ in their specific design strategies.}
% \RED{SORSA}
% LoRA$^2$~\citep{zhang2024lora2} uses the twice-nested parameterized SVD to iteratively project the token representations onto \textit{mutually orthogonal planes}.
% Mo-SARA~\citep{gu2024sara} initializes the singular vectors with principal components of the pre-trained network parameters. They freeze the singular vectors and fine-tune only the randomly initialized singular values under the \textit{same eigenvector mappings} with the pre-trained network parameters. 

\subsection{Catastrophic forgetting and LoRA}
    
Catastrophic forgetting refers a well-known issue in the field of deep learning~\citep{mccloskey1989catastrophic,french1999catastrophic,kirkpatrick2017overcoming}, where the models forget previously acquired knowledge during adaptation to new tasks. To address this challenge, recent studies have proposed various approaches, including knowledge distillation~\citep{li2017kdlearning,hou2019kdlearning}, rehearsal~\citep{riemer2018rehearsallearning,yang2023rehearsalneural} and dynamic architectures~\citep{yan2021dadynamically}. This issue is particularly severe in LLMs, which learn extensive world knowledge through the pre-training process on massive datasets. During the fine-tuning process, where task-specific information is learned based on this world knowledge, forgetting the pre-trained knowledge can significantly undermine the stability and scalability of the models. Catastrophic forgetting has also been observed in parameter-efficient fine-tuning methods, including LoRA, prompting recent studies to propose various approaches to mitigate this issue~\citep{wang2025milora,yang2024corda,ren2024analyzing,yang2024moral,dou2024loramoe}.

\section{Proposed Method}\label{sec:proposed_method}

\subsection{Low-Rank Adaptation with Spectral Clipping}\label{subsec:method}

Our analysis shows that the minor singular components of the pre-trained weights exhibit a lower degree of task alignment and therefore require substantially more task-specific adaptation than the principal components. Furthermore, we empirically and theoretically observe that the growth of adapter singular values during fine-tuning weakens the pre-trained prior and consequently inducing catastrophic forgetting.
Therefore, existing SVD-based approaches such as \citep{meng2024pissa, wang2025milora, wang2024loraga, yang2024corda,zhang2023adalora} operate in the spectral domain by initializing adapters with SVD-derived components at the weight, activation, or gradient level, or by controlling the number of rank via parameterized SVD; however, the subsequent adaptation dynamics along the singular spectrum remain uncontrolled.
Our goal is to ensure that, regardless of the initialization scheme, the adapter primarily learns within the minor spectral subspace of the pre-trained weights—where task-specific adaptation is needed—while avoiding distortion of the principal components, and our approach explicitly controls the spectrum during fine-tuning.

To realize this objective, we introduce spectral clipping on the adapter singular values.
Singular Value Clipping (SVC) has been widely applied in various domains, including GANs and CNNs~\citep{saito2017gansvc,senderovich2022cnnsvc}. By controlling the spectral norm of each layer, SVC preserves the Lipschitz continuity of the network, leading to improved training stability, mitigation of exploding and vanishing gradients, and enhanced generalization and robustness.
However, applying explicit SVD on adapter and clipping at every training step is computationally infeasible for LLM-scale parameters. We therefore adopt a parameterized SVD framework, which enables learning directly in the spectral domain while allowing the singular values of the adapter to be clipped within a controlled range in an efficient manner.
Formally, we parameterize the adapter as a low-rank matrix whose singular values are clipped with respect to an upper bound derived from the spectral distribution of the pre-trained weights.
\begin{align}\label{eq:hilora}
W = W_0 + \Delta W = W_0 + U \Sigma  V^{\top},
\end{align} where $U \in \mathbb{R}^{d_1 \times r}$, $V \in \mathbb{R}^{d_2 \times r}$ are parameterized left and right singular vectors, respectively, and $\Sigma \in \mathbb{R}^{r}$ contains the parameterized singular values $\{s_n\}_{1\leq n \leq r}$. At the start of fine-tuning, $U,V$ are initialized with random $r$ singular vectors of $W_0$, or $U$ is initialized with zero and $V$ with a random Gaussian. Note that SVD on $W_0$ is performed only once before fine-tuning, and the actual operation does not involve any explicit decomposition or reconstruction of $W_0$ during the fine-tuning process. 

\begin{figure}[t]
    \centering
   {\includegraphics[width=0.95\columnwidth]{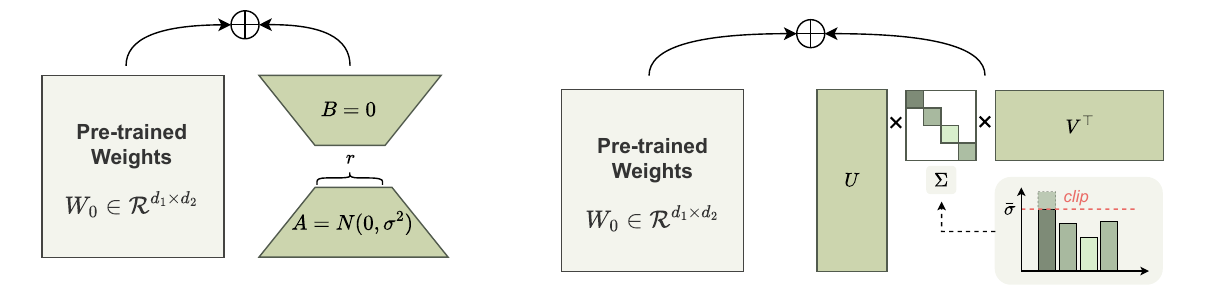}}
    \caption{The architecture of \textbf{SCLoRA}}
    \label{fig:overall_architecture}
\end{figure}

A natural question that follows is: \textit{how should we select an appropriate upper bound on the singular values?}
As illustrated in~\Cref{fig:intro} (a), the principal components of the pretrained network are already aligned with the new task, whereas the minor components exhibit a lower degree of task alignment and therefore require more substantial adaptation.
This observation suggests introducing spectral-aware constraints that guide learning toward the regions of the spectrum where adaptation is needed.
However, using a fixed scalar upper bound is generally arbitrary and sensitive to model scale and layer dimensionality. Instead, defining the upper bound in terms of the quantiles of the pretrained spectral distribution provides a scale-independent and distribution-aware constraint.
Accordingly, we impose a constraint that limits the growth of the adapter singular values while respecting the spectral structure of the pretrained weights. We realize this constraint through \textit{quantile}-based clipping, which restricts the adapter updates to remain within a bounded region defined by the pretrained spectral distribution.
Formally, we denote the singular values of the pretrained weight matrix $W_0$ as ${\sigma_i(W_0)}_{i=1}^{d}$.
The spectral bound $\bar{\sigma}$ is then defined as the $q$-th quantile $Q_q(\cdot)$ of the pretrained spectral distribution:
\begin{align}
\bar{\sigma}
= Q_q({\sigma_i(W_0)}_{i=1}^{d}).
\end{align}
Then, the parameterized singular values are constrained by quantile-based spectral clipping as:
\begin{align}
s_n \leftarrow \min\bigl(\max(s_n, 0), \bar{\sigma}\bigr),
\end{align}
for $n \in \{1,\dots,r\}$. This formulation explicitly links the spectral bound of the adapter to the pretrained spectrum while remaining independent of the absolute parameter scale.
Thus, quantile-based spectral clipping establishes a distribution-aware trust region, enabling controlled adaptation in the regions that require greater modification. An ablation study on the effect of $q$ is in~\Cref{sec:sens_sigma_bar}.
To ensure that the singular values remain non-negative, we set the lower bound to zero in accordance with the definition of SVD.
Furthermore, to enforce the orthogonality of the singular vectors, i.e., $U^\top U = V V^\top = I$, we apply the following regularization term:
\begin{align}
    R(U,V)=\Vert U^\top U-I \Vert + \Vert VV^\top - I\Vert
\end{align} where $I \in \mathbb{R}^{r \times r}$ indicates an identity matrix. This regularization term is controlled by the orthogonal regularization coefficient  $\gamma$. We verify the orthogonality of the parameterized singular vectors in \Cref{app:orthogonality}. We present the training process in~\Cref{alg:train} of~\Cref{app:train_algorithm}.

% and report both analytical and empirical cost analyses — including wall-clock runtime, GPU memory usage— in~\Cref{app:complexity}.

\subsection{Theoretical analysis}\label{sec:theoretical_analysis}

% Following the standard Bayesian formulation of MAP estimation, the objective can be written as:
The optimization of deep learning models, including LoRA, can be interpreted as a Maximum A Posteriori (MAP) estimation of the network parameters $\theta$ on the training data. In transfer learning, the model is first pre-trained on the dataset $\mathcal{D}_A$ and then fine-tuned on the dataset $\mathcal{D}_B$. 
Following the standard Bayesian formulation of MAP estimation, the posterior to be maximized in the MAP estimation is formulated as:
\begin{align}
p(\theta | \mathcal{D}_A, \mathcal{D}_B)&=\frac{p(\mathcal{D}_B|\theta, \mathcal{D}_A)p(\theta|\mathcal{D}_A)}{p(\mathcal{D}_B|\mathcal{D}_A)}\notag\\&=\frac{p(\mathcal{D}_B|\theta)p(\theta|\mathcal{D}_A)}{p(\mathcal{D}_B|\mathcal{D}_A)}.  
\end{align}

Taking a logarithm of the posterior, the MAP objective becomes:
\begin{align}
    \theta^*&= \underset{\theta}{\mathrm{argmax}} \log p(\theta|\mathcal{D}_A,\mathcal{D}_B) \notag\\&= \underset{\theta}{\mathrm{argmax}} \big[ \log p(\mathcal{D}_B|\theta) + \log p(\theta|\mathcal{D}_A)\big].
\end{align}

% \begin{align}
% \arg\max_{\theta}\Big[\log p(\mathcal{D}_B|\theta)+\log p(\theta|\mathcal{D}_A)\Big].
% \end{align}
The first term corresponds to the likelihood of $\mathcal{D}_B$ and is expressed as the loss function for the fine-tuning task, while the second term represents the prior over the parameters induced by $\mathcal{D}_A$.
During fine-tuning, we treat the posterior from the pre-trained task, $p(\theta|\mathcal{D}_A)$, as the prior for the fine-tuning task. However, since the true posterior is intractable, it can be expressed as a function $f(\theta)$ and approximated around the pre-trained mode $\theta_{0}$ using the Laplace approximation, a well-established method in Bayesian deep learning for handling intractable posteriors~\citep{kirkpatrick2017overcoming,ritter2018online,wang2021afec,matena2022merging,gawlikowski2023survey} as:
\begin{align}\label{eq:approx_posterior}
\log \hat{p}(\theta|\mathcal{D}_A) = f(\theta_{0})-
\frac{1}{2} (\theta - \theta_{0})^{\top} F (\theta - \theta_{0}),
\end{align}
where $F$ denotes the Fisher information matrix. The following theorem shows that this approximated prior is upper-bounded by the singular values of the difference between the pre-trained and fine-tuned parameters.

\begin{theorem}\label{thm:map}
Under the Laplace approximation, the approximated log prior probability satisfies:
\begin{align}
\log \hat{p}(\theta \mid \mathcal{D}_A) \le f(\theta_{0}) - \lambda_{\min}(F)\sqrt{\sum_{n=1}^{r}\sigma_{n}^{2}},
\end{align}
where $\lambda_{\min}(F)$ denotes the smallest eigenvalue of $F$,
and $\sigma_{n}$ is the $n$-th singular value of $\theta - \theta_{0}$.
\end{theorem}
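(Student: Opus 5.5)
The plan is to turn the quadratic penalty in the Laplace approximation \eqref{eq:approx_posterior} into a singular-value quantity using two standard facts: a Rayleigh-quotient bound for $F$, and the identity between the Frobenius norm and the singular values. Write $\Delta=\theta-\theta_0$. Viewed as a matrix in $\mathbb{R}^{d_1\times d_2}$, $\Delta$ has rank at most $r$, so its nonzero singular values are $\sigma_1,\dots,\sigma_r$. Viewed as a vector, $\mathrm{vec}(\Delta)$ is what $F$ acts on. The argument has three steps, and I expect the last one to fail as literally stated.

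\textbf{Step 1 (Rayleigh bound).} Since $F$ is a Fisher information matrix, it is symmetric positive semidefinite. The Courant--Fischer characterization then gives
\begin{align}
\mathrm{vec}(\Delta)^{\top} F\, \mathrm{vec}(\Delta) \ge \lambda_{\min}(F)\,\|\mathrm{vec}(\Delta)\|_2^2 .
\end{align}

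\textbf{Step 2 (singular values).} The Euclidean norm of $\mathrm{vec}(\Delta)$ equals the Frobenius norm of $\Delta$, and $\|\Delta\|_F^2=\mathrm{tr}(\Delta^{\top}\Delta)=\sum_{n=1}^{r}\sigma_n^2$. Substituting into \eqref{eq:approx_posterior} yields
\begin{align}
\log \hat p(\theta\mid\mathcal D_A)\le f(\theta_0)-\tfrac12\lambda_{\min}(F)\sum_{n=1}^{r}\sigma_n^2 .
\end{align}

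\textbf{Step 3 (main obstacle).} To reach the stated form I must show
\begin{align}
\tfrac12\lambda_{\min}(F)\,\|\Delta\|_F^2\ge \lambda_{\min}(F)\,\|\Delta\|_F .
\end{align}
This holds exactly when $\|\Delta\|_F\ge 2$ or $\lambda_{\min}(F)=0$. It cannot hold unconditionally, and a one-dimensional check shows why. Take $F=\lambda I$ with $\lambda>0$ and $\|\Delta\|_F=1$. Then the left-hand side of the theorem equals $f(\theta_0)-\lambda/2$, which exceeds the claimed bound $f(\theta_0)-\lambda$.

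So as worded, \cref{thm:map} is not true for all $\theta$, and I cannot prove it in that form. What I can establish is the following.
\begin{itemize}
\item The bound in Step 2 holds unconditionally.
\item The literal inequality of \cref{thm:map} holds on the regime $\sqrt{\sum_{n}\sigma_n^2}\ge 2$. This is the regime of uncontrolled singular-value growth that the theorem is meant to describe.
\item In either form, the prior decreases monotonically in the singular values of $\theta-\theta_0$. That monotonicity is the qualitative conclusion the paper uses to motivate spectral clipping.
\end{itemize}

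In the write-up I would first present the unconditional Step 2 bound. I would then state the square-root version together with its explicit regime condition, and include the one-dimensional counterexample to justify why the condition is needed.
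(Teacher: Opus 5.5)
Your assessment is correct: the inequality cannot be proved as stated, because it is false whenever $\lambda_{\min}(F)>0$. Your counterexample with $F=\lambda I$ and $\|\theta-\theta_0\|_F=1$ is valid.

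The paper's own proof reaches the stated form through exactly the steps you flag as unjustified. It asserts $\Delta\theta^\top F\,\Delta\theta\ge\lambda_{\min}(F)\|\Delta\theta\|_F$ with an unsquared norm, whereas the Rayleigh-quotient bound only gives $\lambda_{\min}(F)\|\Delta\theta\|_F^2$. It also silently drops the factor $\tfrac12$. So even granting its first inequality, it would only obtain $f(\theta_0)-\tfrac12\lambda_{\min}(F)\|\Sigma\|_F$, which is weaker than what is claimed.

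Your Steps 1--2 use the same two ingredients as the paper, and carry them out correctly. These are the Rayleigh bound for the PSD matrix $F$ and the identity $\|\Delta\|_F^2=\sum_{n=1}^r\sigma_n^2$ for a rank-$r$ update. The result is $\log\hat p(\theta\mid\mathcal{D}_A)\le f(\theta_0)-\tfrac12\lambda_{\min}(F)\sum_{n=1}^r\sigma_n^2$. This is the bound the paper should have stated, and it still supports its qualitative message.

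You can sharpen the counterexample into a structural statement. The quadratic form is homogeneous of degree two in $\Delta$, while the claimed lower bound $\lambda_{\min}(F)\|\Delta\|_F$ has degree one. Hence for every $F\succ 0$ the theorem fails for all $\Delta$ with $0<\|\Delta\|_F<2\lambda_{\min}(F)/\lambda_{\max}(F)$, since there $\tfrac12\mathrm{vec}(\Delta)^\top F\,\mathrm{vec}(\Delta)\le\tfrac12\lambda_{\max}(F)\|\Delta\|_F^2<\lambda_{\min}(F)\|\Delta\|_F$. When $F$ is singular, the statement reduces to the trivial $\log\hat p(\theta\mid\mathcal{D}_A)\le f(\theta_0)$. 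So the literal theorem holds for all $\theta$ only in its vacuous case.

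This also argues against giving much weight to your square-root version with the regime condition $\sqrt{\sum_n\sigma_n^2}\ge 2$. The Laplace approximation is only trustworthy near $\theta_0$, which is exactly where the stated bound fails. Your threshold $2$ is also not scale invariant. I would lead with the unconditional quadratic bound.

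Two small fixes:
\begin{itemize}
\item The equivalence in Step 3 should also list the trivial case $\Delta=0$.
\item In your last bullet, what is monotone in the singular values is the upper bound, not $\log\hat p$ itself. The quadratic form also depends on how the singular directions of $\Delta$ align with $F$. With those directions fixed, it equals $\sigma^\top G\sigma$ for a PSD Gram matrix $G$ whose off-diagonal entries may be negative. It therefore scales by $c^2$ when all $\sigma_n$ are multiplied by $c$, but need not increase in each $\sigma_n$ separately.
\end{itemize}
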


The proof is described in \Cref{app:proof_map}. It is worth noting that the negligibility of the higher–order terms in the Laplace approximation is supported by a well-established body of theoretical results in Bayesian asymptotics \citep{kass1990validity}, where the omitted error term vanishes as the posterior concentrates around $\theta_{0}$. We provide further discussion of this approximation error in~\Cref{app:laplace_note}.

According to~\Cref{thm:map}, $\log \hat{p}(\theta|\mathcal{D}_A)$ is upper bounded by the singular values of the parameter difference, which is adapter in LoRA. Specifically, larger singular values of the adapter lead to a decrease in the posterior from the pre-training task, resulting in the loss of pre-trained knowledge, the phenomenon called \textit{catastrophic forgetting}. The catastrophic forgetting problem undermines the strengths of pre-trained models and hinders their adaptability to new tasks, making it crucial to address in order to maintain scalability and reliability in transfer learning~\citep{wang2025milora,yang2024corda,ren2024analyzing,yang2024moral,dou2024loramoe}.
In stochastic optimization, however, the spectral norm of weight matrices tends to grow rapidly. The following proposition from \citet{zhai2023stabilizing} establishes a lower bound on the spectral norm of the ideal update.

\begin{table*}[t]
\centering
\small
\setlength{\tabcolsep}{6.5pt}

\resizebox{0.9\textwidth}{!}{%
\begin{tabular}{l | c |  ccccccccc} \toprule
Method  & \# Params & MNLI  & SST-2  & CoLA  & QQP  & QNLI  & RTE  & MRPC  & STS-B & Avg. \\ \midrule
\rowcolor{gray!10}\multicolumn{11}{c}{\textit{Model: RoBERTa\textsubscript{base}}}  \\
\addlinespace[1.4pt]
LoRA  & 1.33M & 87.93 & 94.80 & 64.49 & 90.94 & 92.73 & 80.39 & 89.05 & 90.87 & 86.40 \\
   AdaLoRA & 1.27M & 87.90 & 94.80 & 62.41 & 90.16 & 92.67 & 81.59 & 89.38 & 91.08 & 86.25 \\
  PiSSA & 1.33M  & \textbf{87.95} & 94.53 & 64.66 & 90.97 & 92.53 & 79.18 & 89.79 & 90.96 & 86.32 \\
   % &  rsLoRA & 1.33M  & 85.26 & 92.35 & 65.17 & 70.76 & 92.48 & 79.54 & 89.05 & 90.88 & 83.19 \\
  MiLoRA & 1.33M  & 87.88 & 94.69 & 64.31 & \textbf{91.02} & 92.96 & 81.35 & 89.30 & 90.96 & 86.56 \\
  LoRA+ & 1.33M  & 86.96 & 93.92 & 63.32 & 90.69 & 92.77 & 81.59 & 88.97 & 90.84 & 86.13 \\
  LoRA-GA & 1.33M  &85.18 & 93.16 & 62.09 & 88.57 & 91.64 & 76.77 & 89.54 & 90.87 & 84.73 \\
CorDA & 1.33M  & 86.28 & 94.38 & 62.43 & 90.14 & 92.36 & 78.10 & 89.87 & 90.68 & 85.53 \\
   DoRA  & 1.41M & 87.81 & 95.11 & 64.23 & 90.65 & 92.93 & 81.35 & 89.54 & 91.01 & 86.58 \\
  \rowcolor{blue!10} \textbf{SCLoRA}& 1.33M  & \textbf{87.95} & \textbf{95.37} & \textbf{64.79} & 90.87 & \textbf{93.09} & \textbf{83.15} & \textbf{90.32} & \textbf{91.22} & \textbf{87.08} \\ 
\midrule
\rowcolor{gray!10}\multicolumn{11}{c}{\textit{Model: DeBERTaV3\textsubscript{base}}}  \\
\addlinespace[1.4pt]
    % Full FT      & 184M & 89.90    & 95.63    & 69.19 & 92.40    & 94.03 & 83.75 & 89.46 & 91.60 & 88.09 \\ \midrule
   BitFit       & 0.10M & 89.37    & 94.84    & 66.96 & 88.41    & 92.24 & 78.70 & 87.75 & 91.35 & 86.02 \\
     HAdapter     & 1.22M & 90.13   & 95.53    & 68.64 & 91.91    & 94.11 & 84.48 & 89.95 & 91.48 & 88.12 \\
   PAdapter     & 1.18M & 90.33   & 95.61    & 68.77 & 92.04    & 94.29 & 85.20 & 89.46 & 91.54 & 88.24 \\
  LoRA\textsubscript{r=8} & 1.33M & 90.65  & 94.95    & 69.82 & 91.99    & 93.87 & 85.20 & 89.95 & 91.60 & 88.34 \\
    AdaLoRA & 1.27M & \textbf{90.76} & 96.10 & 71.45
    & 92.23 & 94.55 & 88.09 & 90.69 & 91.84 & 89.31 \\
 \rowcolor{blue!10}  \textbf{SCLoRA}  &  1.33M &  90.36 & \textbf{96.33}  & \textbf{71.49} & \textbf{92.33}  & \textbf{94.57}  & \textbf{89.41}  &  \textbf{91.58} & \textbf{92.19} & \textbf{89.78} \\
 \midrule
     HAdapter     & 0.61M & 90.12    & 95.30    & 67.87 & 91.65     & 93.76 & 85.56 & 89.22 & 91.30 & 87.93 \\
    PAdapter     & 0.60M & 90.15    & 95.53    & 69.48 & 91.62    & 93.98 & 84.12 & 89.22 & 91.52 & 88.04 \\
      HAdapter     & 0.31M & 90.10    & 95.41    & 67.65 & 91.54    & 93.52 & 83.39 & 89.25 & 91.31 & 87.60 \\
   PAdapter     & 0.30M & 89.89    & 94.72    & 69.06 & 91.40    & 93.87 & 84.48 & 89.71 & 91.38 & 87.90 \\
  LoRA\textsubscript{r=2} & 0.33M & 90.30  & 94.95    & 68.71 & 91.61     & 94.03 & 85.56 & 89.71 & 91.68 & 88.15 \\
    AdaLoRA & 0.32M & \textbf{90.66} & 95.80 & 70.04
    & 91.78 & 94.49 & 87.36 & 90.44 & 91.63 & 88.86 \\
 \rowcolor{blue!10} \textbf{SCLoRA}  & 0.33M & \textbf{90.66} & \textbf{96.18} & \textbf{71.83} & \textbf{91.82} & \textbf{94.50} & \textbf{89.89} & \textbf{91.83} & \textbf{92.00} & \textbf{89.84} \\
    \bottomrule
    \end{tabular}
    }
    \caption{Comparison of various methods with RoBERTa\textsubscript{base} and DeBERTaV3\textsubscript{base} on GLUE tasks with different random seeds. Results with standard deviations are in \Cref{app:main_nlu_std}.}
    \label{tab:main_nlu}

\end{table*}

\begin{proposition}\label{pro:spectral_norm}
    From~\citet{zhai2023stabilizing}, it holds:
    \begin{align}\label{eq:spectral_norm}
        \Vert \Delta \Vert \geq \sqrt{d} \sqrt{1-\frac{1}{d^2}\sum_{i,j=1}^d\frac{\omega^2_{i,j}}{\mu^2{i,j}+\omega^2_{i,j}}}.
    \end{align}
\end{proposition}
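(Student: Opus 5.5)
The plan is to reconstruct the setting of \citet{zhai2023stabilizing}: an adaptive (Adam-type) optimizer updating a $d\times d$ weight matrix, in the idealized regime where the moment estimates are exact. For each entry $(i,j)$, the stochastic gradient has mean $\mu_{i,j}$ and variance $\omega^2_{i,j}$. The first-moment estimate then converges to $\mu_{i,j}$ and the second-moment estimate to $\mathbb{E}[g_{i,j}^2]=\mu_{i,j}^2+\omega_{i,j}^2$. The ideal update, ignoring the learning rate and the $\epsilon$ term, is therefore entrywise
\begin{align*}
\Delta_{i,j}=\frac{\mu_{i,j}}{\sqrt{\mu_{i,j}^2+\omega_{i,j}^2}}.
\end{align*}
I would state this as the modelling assumption inherited from the cited work, since the proposition is quoted from there.

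The first step is to compute the Frobenius norm of this update exactly:
\begin{align*}
\Vert \Delta\Vert_F^2=\sum_{i,j=1}^d \frac{\mu_{i,j}^2}{\mu_{i,j}^2+\omega_{i,j}^2}
=\sum_{i,j=1}^d\Bigl(1-\frac{\omega_{i,j}^2}{\mu_{i,j}^2+\omega_{i,j}^2}\Bigr)
=d^2\Bigl(1-\frac{1}{d^2}\sum_{i,j=1}^d\frac{\omega_{i,j}^2}{\mu_{i,j}^2+\omega_{i,j}^2}\Bigr).
\end{align*}

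The second step relates the spectral norm to the Frobenius norm. A $d\times d$ matrix has at most $d$ nonzero singular values, so
\begin{align*}
\Vert\Delta\Vert_F^2=\sum_{k=1}^{d}\sigma_k(\Delta)^2\le d\,\Vert\Delta\Vert^2,
\end{align*}
which gives $\Vert\Delta\Vert\ge \Vert\Delta\Vert_F/\sqrt d$. Substituting the Frobenius computation yields
\begin{align*}
\Vert\Delta\Vert\ge \frac{d}{\sqrt d}\sqrt{1-\frac{1}{d^2}\sum_{i,j}\frac{\omega_{i,j}^2}{\mu_{i,j}^2+\omega_{i,j}^2}},
\end{align*}
which is exactly \eqref{eq:spectral_norm}.

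The only delicate point is the first step: justifying the idealized entrywise form of the update. This requires exact moment estimates, vanishing $\epsilon$, and a bias-corrected Adam. Since the statement explicitly attributes the result to \citet{zhai2023stabilizing} and concerns the \emph{ideal} update, I would simply adopt their assumptions. Everything after that is an elementary norm inequality.
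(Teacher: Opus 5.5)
Your proposal is correct and is essentially the argument behind this proposition. The paper gives no proof of its own and imports the result from Zhai et al.; that derivation is the same as yours: take the idealized Adam update $\Delta_{i,j}=\mu_{i,j}/\sqrt{\mu_{i,j}^2+\omega_{i,j}^2}$, compute its Frobenius norm exactly, and apply $\Vert\Delta\Vert\ge\Vert\Delta\Vert_F/\sqrt{d}$. Listing the modelling assumptions (exact moment estimates, vanishing $\epsilon$, and implicitly $\mu_{i,j}^2+\omega_{i,j}^2>0$) is the right way to make the quoted statement self-contained.
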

The noise second moment $\omega^2$ is typically in the order of $\mu^2$. Hence, \Cref{pro:spectral_norm} indicates that the spectral norm of the ideal update should be large, growing linearly with $\sqrt{d}$. Moreover, for large batch sizes we would have $\omega^2\ll 1$, resulting in $\Vert \Delta \Vert \sim \sqrt{d}$. 
Proposition~\ref{pro:spectral_norm} from~\citet{zhai2023stabilizing} demonstrates that this growth becomes more pronounced in high-dimensional settings when adaptive optimizers are used. 
Therefore, in general probabilistic optimization for transfer learning, including LoRA, the spectral norm is implicitly driven toward larger values during fine-tuning. 
This increase in the largest singular value reduces the posterior probability of the pre-trained knowledge under the MAP formulation, thereby leading to catastrophic forgetting.
Empirically, a consistent spectrum–forgetting trend is observed in our experiments: In~\Cref{sec:spectral_analysis}, the singular values of the LoRA adapter grow during fine-tuning and this growth is accompanied by degradation of the pre-training performance, indicating that LoRA can be vulnerable to catastrophic forgetting in practice.

\section{Complexity analysis}

In this section, we analyze the complexity of SCLoRA. Empirical runtime and GPU usage are further detailed in \Cref{app:empirical_complexity}.

\paragraph{Parameter Complexity.}
LoRA updates $(d_1+d_2)r$ parameters per layer. \textbf{SCLoRA} introduces a minimal overhead of only $r$ parameters for the singular values, resulting in $(d_1+d_2)r + r$ trainable parameters. This successfully preserves the inherent parameter efficiency of the baseline method.

\paragraph{Computational Complexity.}
During training, standard LoRA requires $\mathcal{O}(d_1 d_2 r)$ operations. \textbf{SCLoRA} operates with a slightly higher complexity of $\mathcal{O}(d_1 d_2 r + (d_1+d_2)r + r^3)$, consistent with other parameterized SVD-based approaches. Due to the strict low-rank constraint, this additional overhead is mathematically negligible. During inference, standard LoRA scales as $\mathcal{O}(d_1 d_2 r)$. By scaling the singular vectors with parameterized singular values, \textbf{SCLoRA} achieves an inference complexity of $\mathcal{O}(d_1 d_2 r + d_2 r)$, which remains highly comparable to the baseline.

\section{Experiments}\label{sec:experiments}
We empirically verify that \textbf{SCLoRA} efficiently adapts to the task and improves the performance.

\subsection{Natural Language Understanding}

\paragraph{Experimental setup.}
Following \citet{hu2021lora} and \citet{zhang2023adalora}, we fine-tune the pre-trained RoBERTa\textsubscript{base} and DeBERTa\textsubscript{base} models on the General Language Understanding Evaluation (GLUE) benchmark~\citep{wang2018glue}, setting the rank to $r=8$ for RoBERTa\textsubscript{base} and $r \in \{2, 8\}$ for DeBERTa\textsubscript{base}.
We report the Matthews correlation for CoLA, Spearman’s correlation for STS-B, and accuracy for all other tasks. Detailed descriptions are provided in \Cref{app:setup_nlu}.

\paragraph{Main results.}
In \Cref{tab:main_nlu}, \textbf{SCLoRA} outperforms various baselines on average. In particular, MiLoRA, which is closely related to our method, fails to achieve optimal performance due to information loss and uncontrolled fine-tuning. By contrast, \textbf{SCLoRA} efficiently adapts to the new task by injecting spectrally clipped singular components.

\begin{table*}[t]
\centering
\small
\resizebox{0.9\textwidth}{!}{%
\begin{tabular}{l|cccc|cccc}
\toprule
\multirow{2}{*}{Method} & \multicolumn{4}{c|}{SQuADv1.1} & \multicolumn{4}{c}{SQuADv2.0} \\ \cmidrule{2-9}
                  & \multicolumn{1}{c}{0.08\%} & \multicolumn{1}{c}{0.16\%} & \multicolumn{1}{c}{0.32\%} & \multicolumn{1}{c|}{0.65\%} & \multicolumn{1}{c}{0.08\%} & \multicolumn{1}{c}{0.16\%} & \multicolumn{1}{c}{0.32\%} & \multicolumn{1}{c}{0.65\%} \\ \midrule  
Full FT          & \multicolumn{4}{c|}{86.0 / 92.7}    & \multicolumn{4}{c}{85.4 / 88.4}    \\ \midrule
HAdapter         & 84.4/91.5    & 85.3/92.1    & 86.1/92.7    & 86.7/92.9   & 83.4/86.6    & 84.3/87.3    & 84.9/87.9    & 85.4/88.3  \\
PAdapter         & 84.4/91.7    & 85.9/92.5    & 86.2/92.8    & 86.6/93.0   & 84.2/87.2    & 84.5/87.6    & 84.9/87.8    & 84.5/87.5  \\
LoRA             & 86.4/92.8    & 86.6/92.9    & 86.7/93.1    & 86.7/93.1& 84.7/87.5    & 83.6/86.7    & 84.5/87.4    & 85.0/88.0   \\
AdaLoRA          & 87.2/93.4 & 87.5/93.6 & 87.5/93.7 & 87.6/93.7 &\textbf{85.6/88.7} & 85.7/88.8 & 85.5/88.6 & 86.0/88.9  \\ 
% PiSSA          & &&&&&&& \\ 
% MiLoRA          & &&&&&&&  \\ 
\rowcolor{blue!10}\textbf{SCLoRA}         & \textbf{87.6/93.6}&  \textbf{88.1/93.9} & \textbf{88.2/94.1}  & \textbf{88.6/94.3} & 85.3/88.3 & \textbf{86.0}/\textbf{89.0} & \textbf{86.0}/\textbf{88.8} & \textbf{86.2/89.0} \\ \bottomrule
% \textbf{SCLoRA}         &  \textbf{88.1/93.9} & \textbf{88.0/93.9} & \textbf{88.7/94.3} & \RED{\textbf{88.9/94.4}} & \textbf{85.9}/\textbf{88.7} & \textbf{85.9}/\textbf{88.9} & \RED{\textbf{86.1/89.1}} & \textbf{86.2/89.1}\\ \bottomrule

    \end{tabular}
    }
\caption{Comparison of various methods with DeBERTaV3\textsubscript{base} on SQuAD datasets}
\label{tab:squad}
\end{table*}

\begin{table*}[t]
    \centering
    \small
    \resizebox{0.95\textwidth}{!}{%
    \begin{tabular}{l l c c c c c c c c c c}\toprule
        % Model & Method & \#Params (\%) & BoolQ & PIQA & SIQA & HellaSwag & Wino. & ARC-e & ARC-c & OBQA & Avg. \\
        Model & Method & \#Params (\%) & BoolQ & PIQA & SIQA & Hella. & Wino. & ARC-e & ARC-c & OBQA & Avg. \\
        \midrule
        ChatGPT & - & - & 73.1 & 85.4 & 68.5 & 78.5 & 66.1 & 89.8 & 79.9 & 74.8 & 77.0 \\
        \midrule
        \multirow{5}{*}{LLaMA-7B} & Prefix & 0.11 & 64.3 & 76.8 & 73.9 & 42.1 & 72.1 & 72.9 & 54.0 & 60.6 & 64.6 \\
        % & Series & 0.99 & 63.0 & 79.2 & 76.3 & 67.9 & 75.7 & 74.5 & 57.1 & 72.4 & 70.8 \\
        % & Parallel & 3.54 & 67.9 & 76.4 & 78.8 & 69.8 & 78.9 & 73.7 & 57.3 & 75.2 & 72.2 \\
        & LoRA & 0.83 & 68.9 & 80.7 & 77.4 & 78.1 & 78.8 & 77.8 & 61.3 & 74.8 & 74.7 \\
        & DoRA$^\dagger$ & 0.43 & 70.0 & 82.6 & 79.7 & 83.2 & 80.6 & 80.6 & 65.4 & 77.6 & 77.5 \\
        & DoRA & 0.84 & 69.7 & 83.4 & 78.6 & 87.2 & 81.0 & 81.9 & 66.2 & 79.2 & 78.4 \\
        \rowcolor{blue!10}\cellcolor{white} & \textbf{SCLoRA} & 0.83 & 70.9 & 83.8 & 80.1 & 88.3 & 81.5 & 82.8 & 66.5 & 81.0 & \textbf{79.4} \\
        % \rowcolor{blue!10}\cellcolor{white} & \textbf{SCLoRA} & 0.83 & 70.9 & 83.8 & 80.1 & 88.3 & 81.5 & 82.8 & 66.5 & 81.0 & \textbf{79.4} \\
        \midrule
        \multirow{6}{*}{LLaMA2-7B} & LoRA & 0.83 & 69.8 & 79.9 & 79.5 & 83.6 & 82.6 & 79.8 & 64.7 & 81.0 & 77.6 \\
        & PiSSA	& 0.83 & 67.6 & 78.1 & 78.4 & 76.6 & 78.0 & 75.8 & 60.2 & 75.6 & 73.8\\
        & MiLoRA & 0.83  & 67.6 & 83.8	 & 80.1 & 88.2 & 82.0 & 82.8 & 68.8 & 80.6 & 79.2 \\
        & DoRA\textsuperscript{\dag} & 0.43 & 72.0 & 83.1 & 79.9 & 89.1 & 83.0 & 84.5 & 71.0 & 81.2 & 80.5 \\
        & DoRA & 0.84 & 71.8 & 83.7 & 76.0 & 89.1 & 82.6 &83.7 & 68.2 & 82.4 & 79.7 \\
        \rowcolor{blue!10}\cellcolor{white}  & \textbf{SCLoRA} & 0.83 & 69.3 & 84.2 & 80.4 & 87.1 & 85.4 & 85.6 & 72.7 & 83.6 & \textbf{81.0} \\
                           
                           \bottomrule
    \end{tabular}
    }
    \caption{Comparison of various methods with LLaMA on commonsense reasoning tasks}
    \label{tab:cs_llama}
\end{table*}

\subsection{Question Answering}
\paragraph{Experimental setup.}
Following~\citet{zhang2023adalora}, we fine-tune DeBERTaV3\textsubscript{base}~\citep{he2021debertav3} under four different budgets: 0.08\%, 0.16\%, 0.32\%, and 0.65\% of the total pre-trained parameters. We evaluate \textbf{SCLoRA} on two question-answering (QA) benchmarks—SQuAD v1.1 \citep{rajpurkar2016squad} and SQuAD v2.0 \citep{rajpurkar2018squadv2}—using the Exact Match (EM) and F1 metrics. Detailed descriptions are in \Cref{app:setup_qa}. 

\paragraph{Main results.}
% \Cref{tab:squad} illustrates that \textbf{SCLoRA} delivers performance comparable to or better than existing baselines in most parameter budgets, suggesting that spectral clipping of the adapter is effective for QA adaptation under various budgets.
\Cref{tab:squad} illustrates that \textbf{SCLoRA} delivers performance comparable to or better than existing baselines in most parameter budgets, suggesting that spectral clipping of the adapter is effective for QA adaptation under various budgets. Furthermore, it outperforms strong baselines like AdaLoRA in most cases, demonstrating the robustness of our spectral constraint approach.

\subsection{Commonsense reasoning}

\paragraph{Experimental setup.}

We evaluate \textbf{SCLoRA} on the commonsense reasoning tasks. Following~\citet{hu2023llmadapters}, we amalgamate the training datasets from all 8 tasks to create the final training dataset and evaluate with individual testing for each task. We fine-tune LLaMA-7B~\citep{touvron2023llama} and LLaMA2-7B~\citep{touvron2023llama2}. The detailed descriptions are provided in \Cref{app:setup_cr}.

\paragraph{Main results.}

\Cref{tab:cs_llama} summarizes the comparison of various fine-tuning methods on commonsense reasoning benchmarks. \textbf{SCLoRA} demonstrates superior performance across both LLaMA-7B and LLaMA2-7B architectures with average accuracy of 79.4 and 81.0. Importantly, SCLoRA not only yields substantial improvements over standard LoRA (+4.7\% and +3.4\% on average), but it also outperforms recent advanced methods such as DoRA and MiLoRA. These results highlights the effectiveness of spectral clipping on adapters for reasoning performance even in larger models. 
% \Cref{tab:cs_llama} shows that \textbf{SCLoRA} outperforms other methods on average, highlighting the effectiveness of spectral clipping on adapters for reasoning performance even in larger models. 

\section{Mitigation of catastrophic forgetting}\label{sec:discussion}
\begin{table*}[t]
\centering
% \small
\label{tab:combined_comparison}
\resizebox{\linewidth}{!}{%
% 컬럼 정의 중앙에 세로선(|)과 양옆 15pt 여백 추가
\begin{tabular}{llccc ccc | llccc ccc}
\toprule

% 첫 번째 헤더 행 (세로선이 끊기지 않도록 multicolumn에도 | 추가)
\rowcolor{gray!10}
\multicolumn{8}{c|}{\textit{Model: RoBERTa\textsubscript{base}}} & \multicolumn{8}{c}{\textit{Fine-tuning tasks: Commonsense reasoning}} \\
\addlinespace[1.4pt]

% 두 번째 헤더 행 (SST-2가 끝나는 지점에도 | 추가)
\multirow{2}{*}{Task} & \multirow{2}{*}{Method} & \multicolumn{3}{c}{MRPC} & \multicolumn{3}{c|}{SST-2} & \multirow{2}{*}{Task} & \multirow{2}{*}{Method} & \multicolumn{3}{c}{LLaMA-7B} & \multicolumn{3}{c}{LLaMA-2-7B} \\
\cmidrule(lr){3-5} \cmidrule(lr){6-8} \cmidrule(lr){11-13} \cmidrule(lr){14-16}

% 세 번째 헤더 행
 & & $\Vert \cdot \Vert_2$ & Acc.\textsubscript{ft} & Acc.\textsubscript{pt} & $\Vert \cdot \Vert_2$ & Acc.\textsubscript{ft} & Acc.\textsubscript{pt} & & & $\Vert \cdot \Vert_2$ & Acc.\textsubscript{ft} & PPL.\textsubscript{pt} & $\Vert \cdot \Vert_2$ & Acc.\textsubscript{ft} & PPL.\textsubscript{pt} \\
\midrule

% 첫 번째 블록 (BC & PG19)
\multirow{3}{*}{BC} 
& Pre-trained & - & - & 61.64 & - & - & 61.64 & \multirow{3}{*}{PG19} & Pre-trained & & - & 6.66 & & - & 6.61 \\
& LoRA        & 3.39 & 89.05 & 3.77 & 12.12 & 94.80 & 32.35 & & LoRA        & 13.16 & 74.7 & 8.69 & 21.46 & 77.6 & 11.36 \\
\rowcolor{blue!10}\cellcolor{white} & \textbf{SCLoRA} & 0.94 & \textbf{90.32} & \textbf{32.00} & 0.45 & \textbf{95.37} & \textbf{51.29} & \cellcolor{white} & \textbf{SCLoRA} & 5.09 & \textbf{79.4} & \textbf{7.79} & 8.70 & \textbf{81.0} & \textbf{8.29} \\
\midrule

% 두 번째 블록 (OWT & C4-en)
\multirow{3}{*}{OWT} 
& Pre-trained & & - & 68.21 & & - & 68.21 & \multirow{3}{*}{C4\textsubscript{en}} & Pre-trained & & - & 7.58 & & - & 7.61 \\
& LoRA        & 3.39 & 89.05 & 18.36 & 12.12 & 94.80 & 54.33 & & LoRA        & 13.16 & 74.7 & 10.54 & 21.46 & 77.6 & 15.26 \\
\rowcolor{blue!10}\cellcolor{white} & \textbf{SCLoRA} & 0.94 & \textbf{90.32} & \textbf{47.27} & 0.45 & \textbf{95.37} & \textbf{65.67} & \cellcolor{white} & \textbf{SCLoRA} & 5.09 & \textbf{79.4} & \textbf{9.53} & 8.70 & \textbf{81.0} & \textbf{10.71} \\

\bottomrule
\end{tabular}
}
\caption{Comparison on catastrophic forgetting}
\label{tab:cf}
\end{table*}

%%%%%%%%%%%%%%%%%%%%%%%%%%%%%%%%%%%%

This section validates that \textbf{SCLoRA} successfully mitigates \textit{catastrophic forgetting}.
\Cref{tab:cf} reports the spectral norm of adapter, fine-tuning accuracy ($\uparrow$), and pre-trained task performance measured by accuracy ($\uparrow$) or perplexity ($\downarrow$), providing a quantitative comparison of catastrophic forgetting across different methods.

The table on the left shows the results of fine-tuning the RoBERTa\textsubscript{base} model using the MRPC and SST-2 datasets, respectively. After fine-tuning, the model was evaluated on the pre-trained datasets BookCorpus (BC) and OpenWebText (OWT).
When applying LoRA to the MRPC, the downstream performance is 89.05; however, the spectral norm increases to 3.39, and the accuracy on the BC drops from 61.64 to 3.77. Similarly, the accuracy on the OWT drops substantially from 68.21 to 18.36. In contrast, under the same conditions, the spectral norm of  \textbf{SCLoRA} is 0.94, which is approximately 27\% of that of LoRA, while achieving accuracies of 32.00 and 47.27 on the BC and OWT, respectively.
Similarly, in the right table, we evaluate LLaMA-7B and LLaMA2-7B fine-tuned on commonsense reasoning tasks, and report perplexity on PG19 and C4\textsubscript{en} datasets. 
Compared to LoRA, \textbf{SCLoRA} suppresses the increase in the spectral norm to approximately 40\% while mitigating the increase in perplexity on the pre-training corpus, successfully preserving the knowledge of the original model.
Overall, these results support that uncontrolled spectral norm growth is a key factor underlying catastrophic forgetting, and that constraining adaptation by injecting singular components with spectral clipping during fine-tuning enables stable task adaptation while effectively preserving pre-trained knowledge. Additional experiments, including the fine-tuning evolution of catastrophic forgetting and comparisons with additional baselines, are reported in~\Cref{app:sec_cf}.

\section{Additional Studies}
In \Cref{app:additional_studies}, we conduct sensitivity analyses of $\gamma$, and $r$, and ablation studies of layer-wise $q$ selection and automatic $\bar{\sigma}$.

\begin{figure}[t]
    \centering
    \subfigure[RTE]{\includegraphics[width=0.49\columnwidth]{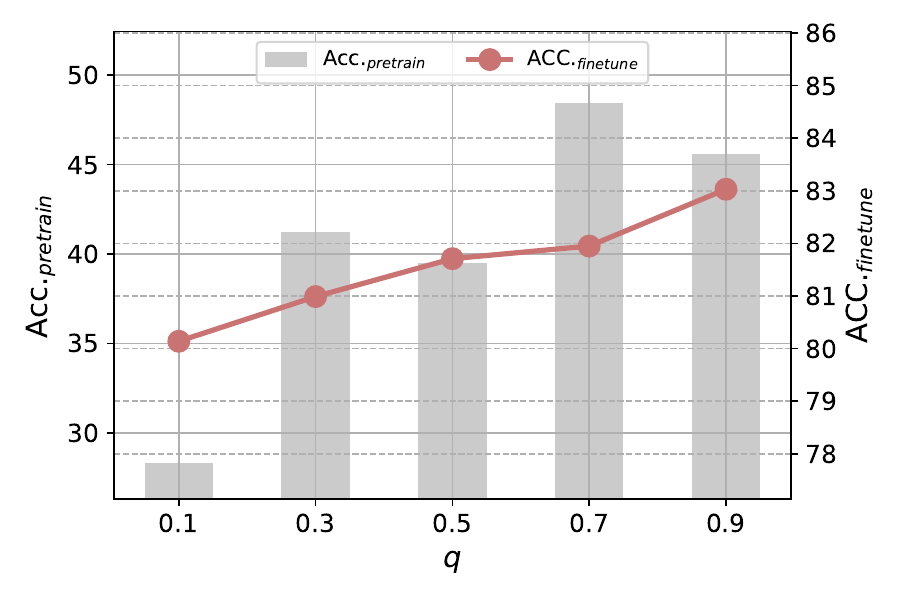}}
    \subfigure[MRPC]{\includegraphics[width=0.49\columnwidth]{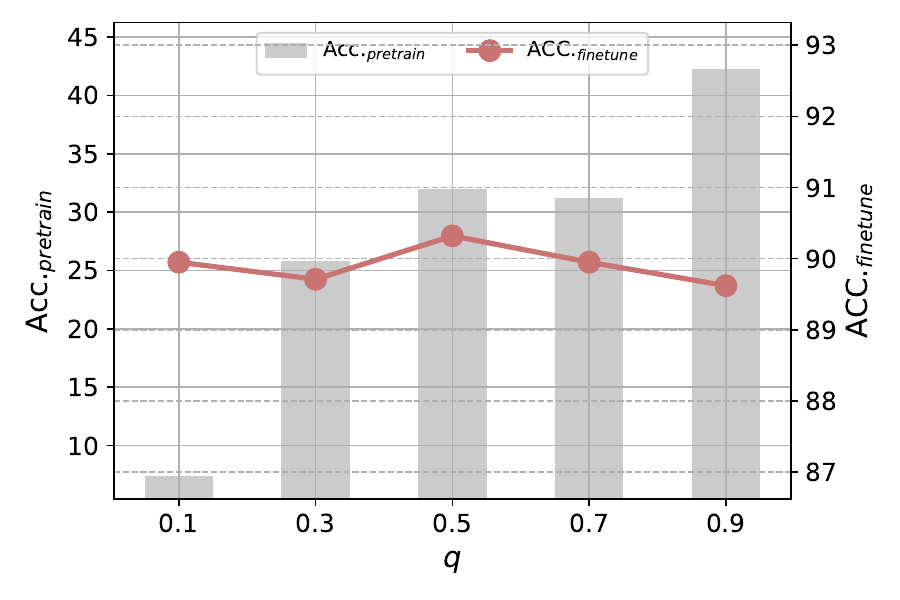}}
    \caption{Sensitivity on $q$}
    \label{fig:sens_quantile}
\end{figure}

\subsection{Sensitivity study on $q$}\label{sec:sens_sigma_bar}

In \Cref{fig:sens_quantile}, we conduct a sensitivity analysis of $q$, which represents the $q$-th quantile, measuring performance changes on both the pre-training and fine-tuning tasks. When $q = 0.1$, the pre-training performance drops, which is consistent with \Cref{thm:map} showing that excessively large adapter singular values weaken the pretrained prior and induce catastrophic forgetting. In contrast, once the quantile exceeds a moderate range, the pretrained performance rapidly recovers, indicating that spectral clipping is sufficient to preserve pretrained knowledge.
Meanwhile, the fine-tuning performance is less sensitive to the choice of $q$. For example, in MRPC, we observe a slight improvement at $q = 0.1$, suggesting that weak spectral constraints can amplify task-specific signals in low-resource settings. However, this gain comes at the cost of severe degradation in pre-training performance and is therefore unlikely to be desirable in practice.
Overall, we observe that when $q$ is set to a moderate range (e.g., $q \ge 0.3$), both pretrained retention and downstream performance remain stable. This indicates that extremely small quantile values should be avoided, whereas a broad moderate region yields consistently robust behavior across tasks.

\subsection{Ablation study on the injected components}

\begin{table}
    \small
    \centering
    \resizebox{0.85\linewidth}{!}{%
    \begin{tabular}{ l| cc | cc } \toprule
    \multirow{2}{*}{Model} & \multicolumn{2}{c|}{MRPC} & \multicolumn{2}{c}{SST-2}  \\       \cmidrule{2-3} \cmidrule{4-5}
                    &  Acc.\textsubscript{ft} & Acc.\textsubscript{pt} &  Acc.\textsubscript{ft} & Acc.\textsubscript{pt}  \\ \midrule
    Pre-trained    &  - & 61.64 & - & 61.64 \\ \midrule
    LoRA           & 89.05 & 3.77 & 94.81 & 32.35   \\
    LoRA\textsubscript{$UV^\top$}     & 89.22 & 3.12 & 94.75 &  39.39 \\
    LoRA\textsubscript{SVD}      & 89.62 & 17.57 & 95.03  &  49.65   \\
    LoRA\textsubscript{$\gamma$=0} & 90.20 & 28.31 & 94.84 & 45.74 \\
    \rowcolor{blue!10}\textbf{SCLoRA}         &  \textbf{90.32} & \textbf{32.00} & \textbf{95.37} & \textbf{51.29}   \\ \midrule
    \end{tabular}
    }
    \caption{Ablation on the injected components}
    \label{tab:abla_svd}
\end{table}

To analyze the influence of the injected components in \textbf{SCLoRA} on the performance of both pre-trained and fine-tuned knowledge, we conduct an ablation study on the following variants: i) `LoRA' refers to the traditional LoRA; ii) `LoRA\textsubscript{$UV^\top$}' applies orthogonal regularization to the singular vectors without the singular values; iii) `LoRA\textsubscript{SVD}' initializes the singular values as ones, allowing them to be learnable from LoRA\textsubscript{$UV^\top$}; iv) `LoRA\textsubscript{$\gamma$=0} applies singular value clipping without orthonormal regularization; and v) `\textbf{SCLoRA}' refers to the proposed method. We measure the accuracy on both the fine-tuning tasks with MRPC and SST-2, and the pre-trained task with BookCorpus dataset. As reported in \Cref{tab:abla_svd}, LoRA significantly sacrifices pre-training performance to adapt on new task. For the MRPC, accuracy on the pre-trained task drops from 61.64 to 3.77, while it achieves comparable accuracy on the fine-tuning task. LoRA\textsubscript{$UV^\top$} has limited expressiveness since its singular values are fixed, sacrificing either performance of pre-trained or fine-tuning task. LoRA\textsubscript{SVD} performs better due to its learnable singular values than LoRA\textsubscript{$UV^\top$}. LoRA\textsubscript{$\gamma$=0} shows promising performance; however, since it does not ensure the orthogonality of the singular vectors, \Cref{thm:map} cannot be applied, and thus it lacks a theoretical guarantee that catastrophic forgetting is consistently mitigated. Notably, \textbf{SCLoRA} learns the singular components with spectral clipping, ensuring both effective adaptation to the fine-tuning task and retention of pre-trained knowledge. For both datasets, \textbf{SCLoRA} achieves the best performance on both tasks.

\section{Conclusion}\label{sec:conclusion}

We propose a novel LoRA method, motivated by the following two rigorous analyses regarding the singular components of network parameters: i) We, for the first time, analyze the Fisher overlap of the pre-trained weights across the singular components. Specifically, the principal singular components of pre-trained weights can be reused for fine-tuning tasks, whereas the minor singular components require significant adaptation; ii) The growth of singular values in the adapters directly causes the phenomenon \textit{catastrophic forgetting}. 
From these analyses, we propose \textbf{SCLoRA}, which injects the parameterized singular components with spectral clipping.
Comprehensive experiments show that \textbf{SCLoRA} achieves strong performance on fine-tuning tasks and successfully retains the pre-trained knowledge. For future direction, the spectrum clipping of SCLoRA could be extended to the Mixture of Experts (MoE) architecture.
    
\section*{Limitation}
Although $q$ is introduced as a hyperparameter, the sensitivity analysis in~\Cref{sec:sens_sigma_bar} shows that once $q$ exceeds a moderate range, both fine-tuning and pre-trained performance remain stable. While small variations on performance may exist, the method is generally robust to the choice of $q$. Moreover, this stability is observed consistently across multiple tasks and model scales, suggesting the practical sensitivity of $q$ is limited.

\section*{Acknowledgments}
% Noseong Park is the corresponding author. This work was partly supported by the Institute for Information \& Communications Technology Planning \& Evaluation (IITP) grants funded by the Korean government (MSIT) (No. RS-202400457882, AI Research Hub Project), Samsung Electronics Co., Ltd. (No. G01240136, KAIST Semiconductor Research Fund (2nd)).
Noseong Park is the corresponding author. This work was partly supported by the Institute for Information \& Communications Technology Planning \& Evaluation (IITP) grants funded by the Korean government (MSIT) (No. RS-202400457882, AI Research Hub Project, 34\%; N10260110, AI Meta-Scientist, 33\%), Samsung Electronics Co., Ltd. (No. G01240136, KAIST Semiconductor Research Fund (2nd), 33\%).

\clearpage

% Bibliography entries for the entire Anthology, followed by custom entries
%\bibliography{anthology,custom}
% Custom bibliography entries only
\bibliography{references}

\clearpage
\appendix

\section{Reproducibility Statement}\label{app:reproducibility}
In an effort to ensure reproducibility, we report the description of dataset in \Cref{app:dataset_description_nlu}, \Cref{app:dataset_description_qa} and \Cref{app:dataset_description_cr}. Also we report the best hyperparameters of our experiments in \Cref{app:best_hyperparameter_nlu}, \Cref{app:best_hyperparameter_qa} and \Cref{app:best_hyperparameter_cr}. Our \textbf{SCLoRA} code to reproduce the experiment can be found at \url{https://github.com/hyowonwi/SCLoRA}.

\section{Ethical Consideration}\label{app:ethical}
We utilized publicly available datasets, including GLUE, SQuAD and commonsense reasoning, which are commonly employed in academic research, and all sources have been appropriately cited. This research does not involve any personal or confidential information, thereby eliminating concerns related to privacy. Our proposed approach and the resulting insights contribute to the advancement of artificial intelligence while adhering to principles of ethical innovation and responsibility.

\section{Broader Impact Statement}
This paper presents work whose goal is to advance the field of Machine Learning. There are many potential societal consequences of our work, none which we feel must be specifically highlighted.

\section{Use of LLMs}
In accordance with ACL 2026 policy, we acknowledge the use of LLMs in the preparation of this paper. Their use was limited solely to improving translation accuracy and ensuring grammatical correctness.

\section{Formulation \& Additional Visualization of Fisher Overlap}\label{app:fisher_overlap_vis}

\begin{figure*}[h!]
    \centering
    \small
    \begin{subfigure}[MNLI]{\includegraphics[width=0.24\textwidth]{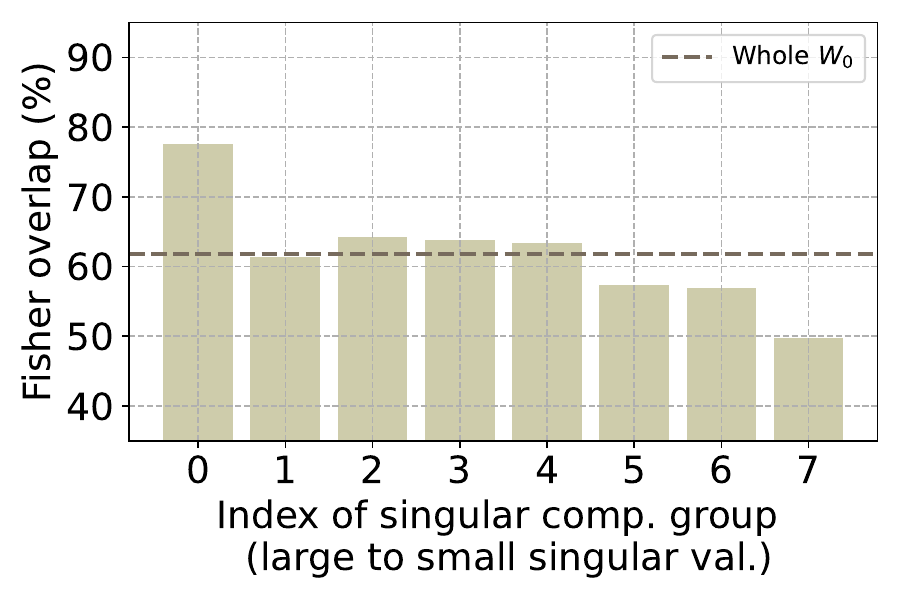}\label{fig:mnli}}
    \end{subfigure}
    \begin{subfigure}[CoLA]{\includegraphics[width=0.24\textwidth]{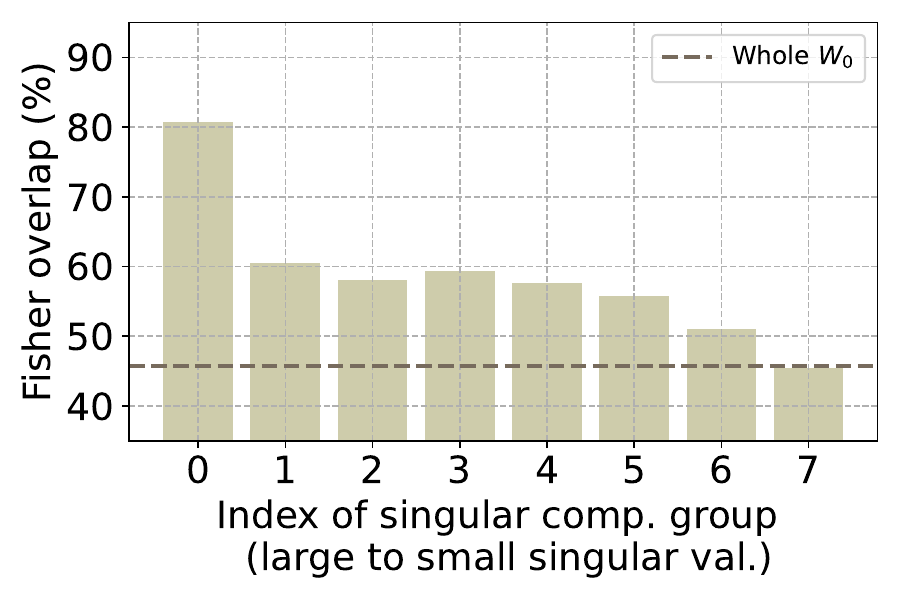}\label{fig:cola}}
    \end{subfigure}
    \begin{subfigure}[MRPC]{\includegraphics[width=0.24\textwidth]{fig/1.intro/fisher_overlap/fisher_overlap_mrpc.pdf}\label{fig:mrpc}}
    \end{subfigure}
    \begin{subfigure}[QNLI]{\includegraphics[width=0.24\textwidth]{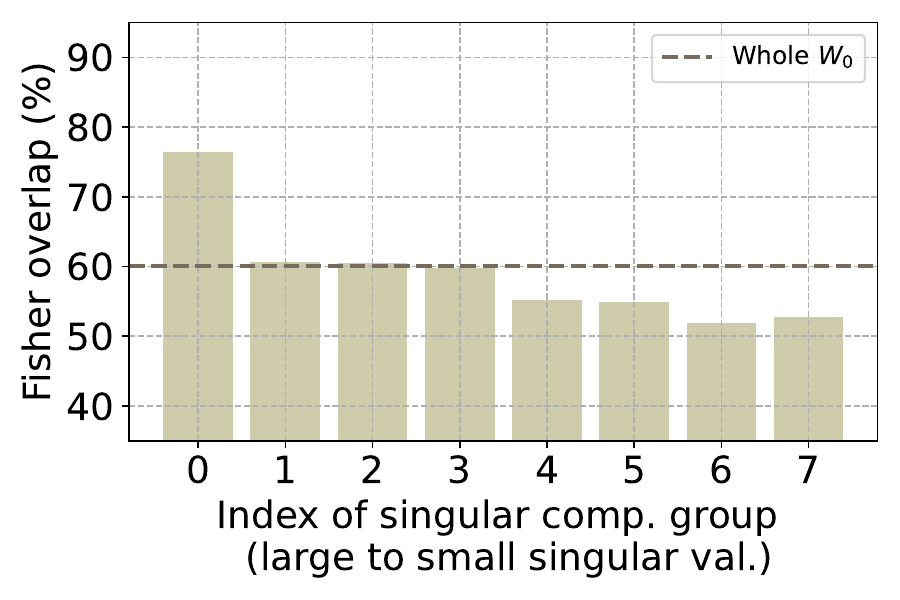}\label{fig:qnli}}
    \end{subfigure}
    \begin{subfigure}[QQP]{\includegraphics[width=0.24\textwidth]{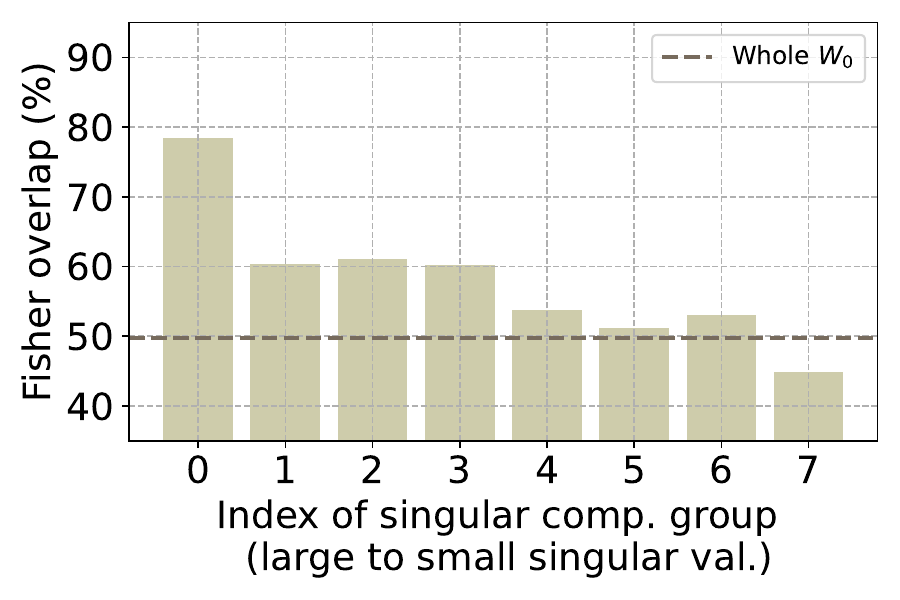}\label{fig:qqp}}
    \end{subfigure}
    \begin{subfigure}[RTE]{\includegraphics[width=0.24\textwidth]{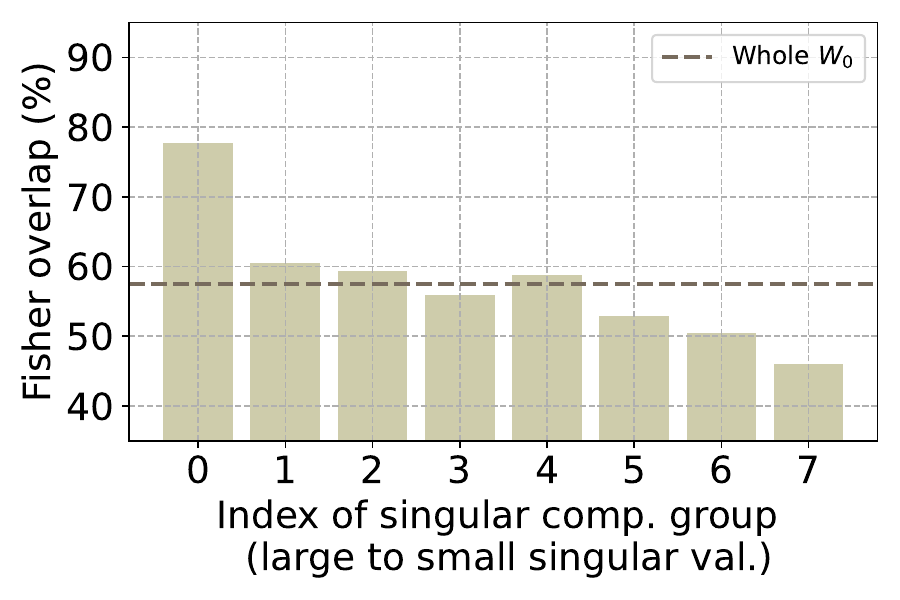}\label{fig:rte}}
    \end{subfigure}
    \begin{subfigure}[SST-2]{\includegraphics[width=0.24\textwidth]{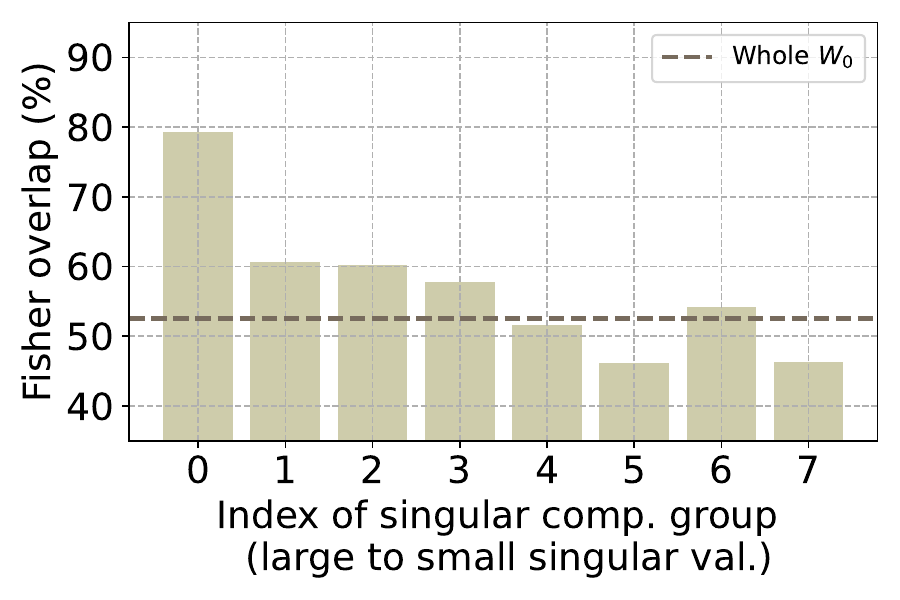}\label{fig:sst2}}
    \end{subfigure}
    \caption{The Fisher overlap~\citep{kirkpatrick2017overcoming} between the pre-trained task (BookCorpus) and the fine-tuning task (MRPC), evaluated over partial reconstructions of the pre-trained network parameters obtained by grouping singular components sorted from large to small according to their singular values.}
    \label{fig:fisher_overlap_all}
\end{figure*}

Following~\citet{kirkpatrick2017overcoming}, to examine whether different tasks solved by the same network rely on overlapping parameter subsets (see \Cref{fig:intro} (a)), we assessed the similarity of each task's Fisher information matrix. Specifically, we first computed the Fisher matrices for the two tasks, denoted by $F_{1}$ and $F_{2}$. We then normalized each matrix so that its trace was equal to 1, yielding $\hat{F}_{1}$ and $\hat{F}_{2}$. Next, we measure how closely these matrices aligned by computing the Fr\'echet distance, a metric on positive-semidefinite matrices, given as:
\begin{align}
    d^2(\hat{F}_1, \hat{F}_2)&=\frac{1}{2}\text{tr}\big(\hat{F}_1+\hat{F}_2-2(\hat{F}_1\hat{F}_2)^{1/2}\big)\\&=\frac{1}{2}\Vert \hat{F}_1^{1/2}-\hat{F}_2^{1/2}\Vert_F,
\end{align}
where this quantity lies between 0 and 1. We then define the \textit{overlap} of the two tasks' Fisher matrices as $1-d^2$. Hence, an overlap of 0 implies that the two tasks employ entirely distinct sets of parameters, whereas an overlap of 1 indicates that one Fisher matrix is simply a scaled version of the other (i.e., $F_1 = \alpha F_2$ for some $\alpha > 0$).

Then, to verify the Fisher overlap across the singular components of the pre-trained network parameters, we perform SVD on the pre-trained parameters of the RoBERTa\textsubscript{base} model, trained on multiple datasets including BookCorpus. We group the singular components sorted by their singular values and reconstruct partial versions of the parameters from each group. \Cref{fig:fisher_overlap_all} shows the Fisher overlap computed for the BookCorpus task as well as for each task in the GLUE benchmark.
Across all tasks, the Fisher overlap progressively decreases as we move from groups containing larger singular values to those with smaller ones, indicating that fine-tuning increasingly struggles to reuse the finer-grained partial reconstructions of the pre-trained parameters.

\section{Proof of \Cref{thm:map}}\label{app:proof_map}

% \begin{theorem31}
% Let $\theta_0$ and $\theta$ be the pre-trained and fine-tuned weights, respectively. Then the log probability of the prior $\log p(\theta | \mathcal{D}_A)$ for fine-tuning task can be approximated using Laplace Approximation as $\log p(\theta|\mathcal{D}_A) \simeq  f(\theta_0) - \frac{1}{2} (\theta-\theta_0)^\top F (\theta - \theta_0)$. From this, the approximated log probability of the prior is upper-bounded as follows:
% \begin{align}
%     \log \hat{p}(\theta|\mathcal{D}_A)  \leq f(\theta_0) - \lambda_{\min}(F) \sqrt{{\sum_{n=1}^{r} \sigma_n^2}},
% \end{align} where $\lambda_{\min}(\cdot)$ indicates the smallest eigenvalue and $\sigma_n$ is $n$-th singular value of $\theta-\theta_0$.
% \end{theorem31}

\begin{proof}

The optimization of neural networks can be considered as a process of estimating the network parameters $\theta$ through maximum a posteriori (MAP) estimation using the training data. This involves the pre-training dataset $\mathcal{D}_A$ and the fine-tuning dataset $\mathcal{D}_B$. The pre-trained weights are denoted as $\theta_0$, and the fine-tuned weights are represented as $\theta$.

The posterior to be maximized in the MAP estimation is formulated as:
\begin{align}
p(\theta | \mathcal{D}_A, \mathcal{D}_B)&=\frac{p(\mathcal{D}_B|\theta, \mathcal{D}_A)p(\theta|\mathcal{D}_A)}{p(\mathcal{D}_B|\mathcal{D}_A)}\notag\\&=\frac{p(\mathcal{D}_B|\theta)p(\theta|\mathcal{D}_A)}{p(\mathcal{D}_B|\mathcal{D}_A)}.  
\end{align}

Taking a logarithm of the posterior, the MAP objective becomes:
\begin{align}
    \theta^*&= \underset{\theta}{\mathrm{argmax}} \log p(\theta|\mathcal{D}_A,\mathcal{D}_B) \notag\\&= \underset{\theta}{\mathrm{argmax}} \big[ \log p(\mathcal{D}_B|\theta) + \log p(\theta|\mathcal{D}_A)\big].
\end{align}
Since the true posterior is intractable, we approximate the posterior using Laplace Approximation. $\log p(\theta|\mathcal{D}_A)$ can be expressed as a function $f(\theta)$ and approximated near the optimal point $f(\theta_0)$, where $\theta_0$ represents the pre-trained parameters, and $\nabla f(\theta_0) = 0$. Subsequently, a second-order Taylor expansion of $f(\theta)$ around $\theta_0$ is performed as follows:
\begin{align}
    \log p(\theta|\mathcal{D}_A) &\simeq f(\theta_0) + \frac{1}{2} (\theta-\theta_0) \nabla^2 f(\theta_0) (\theta - \theta_0) \notag\\&= f(\theta_0) + \frac{1}{2} (\theta-\theta_0)^\top H (\theta - \theta_0),
\end{align}where $H$ denotes the Hessian matrix of $f(\theta)$ evaluated at $\theta_0$. The expected value of the Hessian over the data distribution corresponds to the Fisher information matrix (FIM) $F$, defined as $F = -\mathbb{E}_{\mathcal{D}_A}[H]$. Following~\citep{mackay1992practical,kirkpatrick2017overcoming}, we approximate the posterior as a Gaussian distribution with mean given by the parameters $\theta_0$ and a diagonal precision given by the diagonal of the Fisher information matrix $F$. Given this approximation, the log probability can be expressed as:
\begin{align}
    \log p(\theta|\mathcal{D}_A) &\simeq \log \hat{p}(\theta|\mathcal{D}_A) \notag\\&= f(\theta_0) - \frac{1}{2} (\theta-\theta_0)^\top F (\theta - \theta_0),
\end{align} where the $F$ is symmetric and positive semi-definite, i.e., for any vector $v$, $vFv^\top \geq 0$. Then using the singular value decomposition (SVD) on $\theta - \theta_0 = \Delta \theta= U\Sigma V^\top$ where $U \in \mathbb{R}^{d_1 \times r}$,  $V \in \mathbb{R}^{r \times d_2}$, and $\Sigma \in \mathbb{R}^{r}$ with $\{\sigma_n\}_{1\leq n \leq r}$. As $U,V$ are orthonormal singular vectors and $F$ is a positive semi-definite matrix,
\begin{align}
    \Delta \theta^\top F \Delta \theta \geq \lambda_{\min}(F) \Vert \Delta \theta \Vert_F= \lambda_{\min}(F) \Vert \Sigma \Vert_F.
\end{align}
Therefore, the log probability of the approximated prior for the fine-tuning task from the pre-trained task is upper bounded as:
\begin{equation}
\begin{aligned}
    \log \hat{p}(\theta|\mathcal{D}_A) 
        &= f(\theta_0) - \tfrac{1}{2}(\theta-\theta_0)^\intercal F (\theta - \theta_0) \\&\leq f(\theta_0) - \lambda_{\min}(F)\, \Vert \Sigma \Vert_F \\
        &= f(\theta_0) - \lambda_{\min}(F)\, \sqrt{\sum_{n=1}^{r} \sigma_n^2}.
\end{aligned}
\end{equation}

\end{proof}

\section{Well-Established Properties on Laplace Approximation}\label{app:laplace_note}

\subsection{Error bound of Laplace Approximation}\label{app:higher_order_term_laplacian}

In Bayesian inference, one of the most widely used methods for approximating the posterior distribution is the Laplace approximation~\citep{kass1990validity,kirkpatrick2017overcoming,ritter2018online,wang2021afec,matena2022merging,gawlikowski2023survey}. This method expands the log-posterior function around its mode (i.e., the MAP estimate) using a Taylor series and retains terms up to the second order, thereby approximating the posterior distribution by a Gaussian distribution. In other words, higher-order terms beyond the quadratic expansion are discarded, and the resulting approximation error is generally limited and asymptotically negligible. 

In particular, under standard regularity conditions, it is well established that the relative error of Laplace approximation is no worse than $\mathcal{O}_p(n^{-1})$ under standard regularity conditions, where $\mathcal{O}_p$ refers to stochastic boundedness~\citep{kass1990validity,bilodeau2023tightness}. This ensures that, as the sample size $n$ increases, the approximation error vanishes at the rate of $n^{-1}$. 
Moreover, in deep learning, where the number of training samples $n$ is typically very large, the accuracy of the Laplace approximation is further reinforced. Consequently, the Laplace approximation provides not only a practical tool but also a theoretically justified method for posterior approximation in both Bayesian inference and large-scale probabilistic modeling.

\subsection{Decay Rate of the Integral over the Mode-Distant Region}\label{app:mode_distant_term_laplacian}

% Laplace approximation is applied when the target function is sharply concentrated around a single mode $\theta_{0}$ and decays rapidly as $\theta$ moves away from it. The integral is rewritten in exponential form, and the log-posterior is approximated by a second-order Taylor expansion around $\theta_{0}$. In this setting, regions where $\lVert \theta - \theta_{0} \rVert$ is large correspond to the tail of the function. Importantly, the approximation error in this tail region should not be evaluated solely by the magnitude of $\lVert \theta - \theta_{0} \rVert$, but by its \emph{actual contribution to the integral}. Building on the classical analysis of \citet{kass1990validity}, the posterior mass increasingly concentrates around $\theta_{0}$ as the sample size $n$ grows, while the integral over mode-distant regions becomes exponentially small. Formally, for a sufficiently small neighborhood $B_{\delta}(\theta_{0})$ of the mode, the contribution of the mode-distant region satisfies 
% \begin{align} \int_{\Theta-B_{\delta}(\theta_{0})} g(\theta)\exp\!\bigl\{l_{n}(\theta)-l_{n}(\theta_{0})\bigr\}\, d\theta \le \exp(-nc), 
% \end{align}
% for some constants $c>0$ and $\alpha>0$ That is, the integral over regions distant from $\theta_{0}$ decays exponentially in $n$, implying that its contribution to the overall integral vanishes asymptotically. Consequently, the principal contribution to the Laplace approximation arises from the local quadratic neighborhood around $\theta_{0}$, and the approximation error is dominated by this region rather than by the tail.

Laplace approximation is applied when the target function is sharply concentrated around a single mode $\theta_0$ and decays rapidly as $\theta$ moves away from it. The method rewrites the integral in exponential form and then approximates the log-posterior by a second-order Taylor expansion around its mode. The region where $\theta-\theta_0$ is large corresponds to the tail of the function. The approximation error in this region should not be judged solely by the magnitude of $\lvert \theta - \theta_0 \rvert$; rather, its actual contribution to the integral must be considered.
\citet{kass1990validity} rigorously demonstrates that this tail contribution is negligible. First, as the sample size increases, the likelihood function becomes increasingly peaked, so the posterior concentrates around the mode $\theta_0$. Second, the integral over regions distant from $\theta_0$ decays exponentially with $n$, that is, it is bounded by $\exp(-nc)$ for $c>0$. Consequently, the integral over the mode-distant region converges to zero, and the principal contribution to the integral arises near the mode.

\section{Exponential Decay of Singular Values}\label{app:singular_value_decay}

\begin{figure}[h]
    \centering
    \includegraphics[width=0.9\columnwidth]{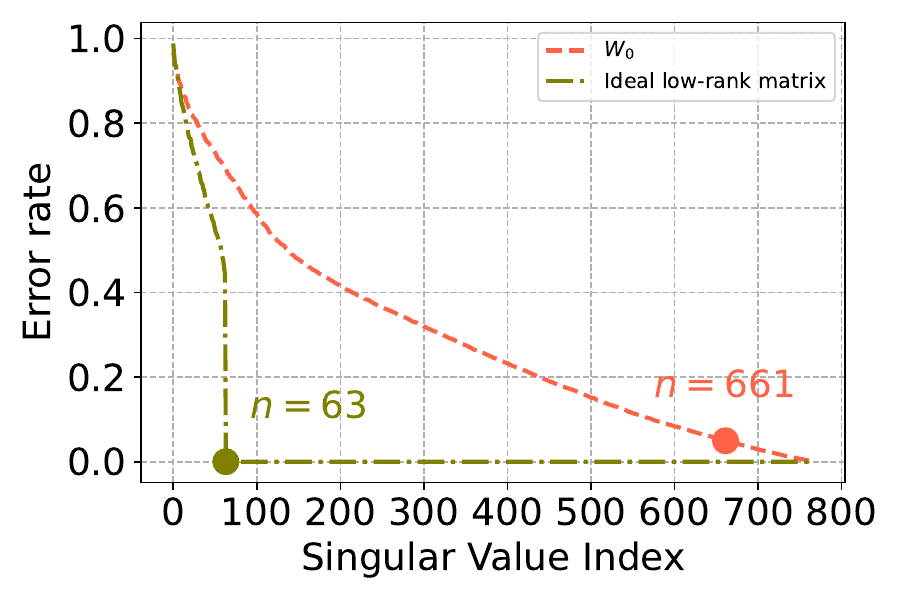}
    \caption{The error rate of the normalized singular values for: (i) the final output projection layer weights $W_0$ in the self-attention mechanism of DeBERTaV3\textsubscript{base}, and (ii) an ideal low-rank matrix with rank $r=64$. The marker indicates the $n$-value where the approximation error reaches 5\%.}
    \label{fig:error_rate}
\end{figure}

To find the best possible $n$-dimensional subspace $V_n$ such that the closest approximation $v \in V_n$ to $W$ minimizes the error $\Vert W - v\Vert_X$, the definition of Kolmogorov $n$-width is formulated as follows:
\begin{align}
d_n(W, X) = \inf_{\substack{V_n \subset X \\ \dim V_n = n}}  \inf_{v \in V_n} \Vert W - v \Vert_X,
\end{align} where $V_n$ is $n$-dimensional subspace of $X$, $v$ is an element from the subspace $V_n$. `inf' stands for infimum. When using the Frobenius norm (or spectral norm) with matrices, the Kolmogorov $n$-width is computed by the singular values of $W$ as follows:
\begin{align}
    d_n(W,X)=\sigma_{n+1},
\end{align} where $\sigma_{n+1}$ is the $(n+1)$-th largest singular value of the matrix $W$. The Kolmogorov $n$-width measures how well a set $W$ can be approximated by an $n$-dimensional subspace. In other words, it represents the minimal maximum error when approximating with an $n$-dimensional subspace. Then we can determine the optimal dimensionality needed to achieve a desired approximation accuracy.

If the singular values decrease rapidly, $W$ can be well approximated even for small $n$, and the Kolmogorov $n$-width also decreases quickly. Therefore, the singular value decay rate $\alpha$, which plays a pivotal role in determining how effectively a matrix can be approximated, is commonly modeled by an exponential decay function as follows: 
\begin{align} 
\sigma'_n = C e^{-\alpha n},
\end{align} where $\sigma'_n$ represents the $n$-th modeled singular values, $C>0$ is a constant, and $\alpha>0$ is the decay rate. When the decay rate $\alpha$ is low, the singular values decrease gradually, resulting in large errors when approximating with the same $n$ dimensions. To minimize the approximation errors, a larger $n$ is required, indicating that significant information is contained in the lower singular values.

\paragraph{Empirical analysis of the Kolmogorov $n$-width.}
To empirically analyze the Kolmogorov $n$-width of the pre-trained language model, we present error rates based on low-rank approximation under the same conditions as shown in \Cref{fig:intro} (b) of Introduction. The formulation of error rates $E_W(n)$ is as follows: 
\begin{align}
E_W(n) = \left( \frac{\| W - v \|_F}{\| W \|_F} \right) \times 100 \%,
\end{align} where $W$ is the original matrix and $v$ is the approximated matrix obtained by truncating the SVD to rank $n$. The error rates for the pre-trained model and the ideal low-rank matrix are presented in \Cref{fig:error_rate}, with markers indicating the $n$-value where the error rate reaches 5\%. For the ideal low-rank matrix, the rank at which the error rate reaches 5\% is 63. This suggests that the matrix has a low-dimensional structure, with the most important information concentrated in the top singular values. The lower singular values have little effect on the approximation and can be considered noise. In contrast, for the pre-trained model, the $n$-value required to reach 95\% approximation is 661, which is significantly larger than ideal low rank matrix. This indicates that the data is complex and high-dimensional, and the lower singular values contain important information rather than merely noise.

\section{Discussion on Constraining the Frobenius Norm of $\Delta W$}\label{app:discuss_frobenius}

\Cref{thm:map} provides an upper bound of approximated posterior based on the Frobenius norm of $\Delta \theta$ and the minimum eigenvalue of the Fisher Information Matrix (FIM), $\lambda_{\min}(F)$. However, simply constraining the Frobenius norm is not sufficient. This is because the FIM $F$ encodes information about \textit{important directions} about the pre-trained task in the parameter space. 
Specifically, \Cref{eq:approx_posterior} provides the approximated log posterior of pre-trained task using Laplace approximation.
The FIM $F$ is a positive semi-definite matrix. The quadratic term quantifies how much the parameter shift $\Delta \theta = \theta - \theta_0$ affects the output distribution of the original task. In low-rank update methods like LoRA, the parameter update can be expressed as $\Delta \theta = U \Sigma V^\top$. Due to the low-rank nature of the update, the singular values $\sigma_i$ often become concentrated in a small number of singular directions.

Even for the same Frobenius norm of $\Delta \theta$, if $\sigma_n$ is heavily concentrated in certain directions, and those directions align with sensitive ones in the FIM (i.e., those with large eigenvalues), the penalty term $\Delta \theta^\top F \Delta \theta$ can become significantly larger (not $\Delta \theta^\top \Delta \theta$). As a result, the approximated posterior decreases more sharply, which intensifies catastrophic forgetting on the pre-trained task. Therefore, the Frobenius norm merely controls the total magnitude of change, but does not prevent the change from being concentrated in directions that are crucial for the original task. In contrast, \textbf{SCLoRA} explicitly controls individual singular values via clipping, thereby directly limiting updates in directions where the model is most sensitive. This goes beyond a simple norm constraint and introduces a structural bias that suppresses updates in forgetting-prone directions. Therefore, while the Frobenius norm constrains only the total size of the update, \textbf{SCLoRA} enables spectral control via clipping over which components grow, effectively mitigating catastrophic forgetting in a more targeted and principled way. This is also well-described in~\citet{kirkpatrick2017overcoming}, where L2-norm cannot mitigate catastrophic forgetting because important parameters from previous tasks are not adequately preserved. The standard L2-norm regularization treats all parameters uniformly, failing to account for their varying importance, and thus cannot effectively mitigate this problem.

\section{Algorithm of \textbf{SCLoRA}}\label{app:train_algorithm}
In this section, we summarize the detailed algorithm of \textbf{SCLoRA} in \Cref{alg:train}.

\begin{algorithm}[h]
\KwIn{Dataset $\mathcal{D}$; total iterations $T$; learning rate $\eta$; $\gamma$, $\bar{\sigma}_k.$}
\For{$t = 1, \dots, T$}{

    % $\Sigma^{(t)}_k = \text{clamp}(\Sigma_k^{(t)}, 0, \bar{\sigma})$\;
    $\Sigma^{(t)}_k = \text{min}(\text{max}(\Sigma^{(t)}_k,0),\bar{\sigma}_k)$\;

    $W^{(t)}_k=W_0+ U^{(t)}_k \Sigma^{(t)}_k (V^{(t)}_k)^\top$\;
     
    $U_k^{(t+1)} = U_k^{(t)} - \eta \nabla_{U_k}(\mathcal{L}(U_k^{(t)}, \Sigma_k^{(t)}, V_k^{(t)}) + \gamma R(U_k^{(t)}, V_k^{(t)}))\;$
    
    $V_k^{(t+1)} = V_k^{(t)} - \eta \nabla_{V_k}(\mathcal{L}(U_k^{(t)}, \Sigma_k^{(t)}, V_k^{(t)}) + \gamma R(U_k^{(t)}, V_k^{(t)}))\;$
    
    $\Sigma_k^{(t+1)} = \Sigma_k^{(t)} - \eta \nabla_{\Sigma_k} \mathcal{L}(U_k^{(t)}, \Sigma_k^{(t)}, V_k^{(t)})$\;
}
\KwOut{The fine-tuned parameters $\{U^{(T)}, \Sigma^{(T)}, V^{(T)}\}$; $W^{(T)} = W_0 + U^{(T)} \Sigma^{(T)} (V^{(T)})^\top$.}
\caption{How to train \textbf{SCLoRA}}\label{alg:train}

\end{algorithm}

\section{LoRA with Discrete Fourier Transform}\label{app:dft}

\begin{table*}[h]
    \centering
    \label{app:tab_nlu_fourierft_config}
    \resizebox{0.85\textwidth}{!}{%
    \begin{tabular}{l | c | ccccccc }
        \toprule
        Method & \# Params & SST-2 & MRPC & CoLA & QNLI & RTE & STS-B & Avg. \\
        \midrule
        FT & 125M & 94.8\textsubscript{$\pm$0.2} & 90.2\textsubscript{$\pm$0.6} & 63.6\textsubscript{$\pm$2.6} & 92.8\textsubscript{$\pm$0.2} & 78.7\textsubscript{$\pm$0.7} & 91.2\textsubscript{$\pm$0.5} & 85.2 \\
        \midrule
        BitFit & 0.1M & 93.7\textsubscript{$\pm$0.3} & \textbf{92.7\textsubscript{$\pm$0.6}} & 62.0\textsubscript{$\pm$1.9} & 91.8\textsubscript{$\pm$0.2} & \underline{81.5}\textsubscript{$\pm$0.8} & 90.8\textsubscript{$\pm$0.6} & \underline{85.4} \\
        Adpt$^D$ & 0.3M & 94.2\textsubscript{$\pm$0.1} & 88.5\textsubscript{$\pm$1.1} & 60.8\textsubscript{$\pm$1.1} & 93.1\textsubscript{$\pm$0.4} & 71.5\textsubscript{$\pm$2.7} & 89.7\textsubscript{$\pm$0.7} & 83.0 \\
        Adpt$^D$ & 0.9M & 94.7\textsubscript{$\pm$0.3} & 88.4\textsubscript{$\pm$0.1} & 62.6\textsubscript{$\pm$0.9} & 93.0\textsubscript{$\pm$0.2} & 75.9\textsubscript{$\pm$0.4} & 90.3\textsubscript{$\pm$0.2} & 84.2 \\
        LoRA & 0.3M & \textbf{95.1\textsubscript{$\pm$0.2}} & 89.7\textsubscript{$\pm$0.7} & 63.4\textsubscript{$\pm$1.0} & \underline{93.3}\textsubscript{$\pm$0.2} & 78.4\textsubscript{$\pm$0.2} & \textbf{91.5\textsubscript{$\pm$0.2}} & 85.2 \\
        AdaLoRA & 0.3M & 94.5\textsubscript{$\pm$0.2} & 88.7\textsubscript{$\pm$0.5} & 62.0\textsubscript{$\pm$0.5} & 93.1\textsubscript{$\pm$0.3} & 81.0\textsubscript{$\pm$0.9} & 90.5\textsubscript{$\pm$0.2} & 85.0 \\
        DyLoRA & 0.3M & 94.3\textsubscript{$\pm$0.2} & 89.5\textsubscript{$\pm$0.6} & 61.1\textsubscript{$\pm$0.9} & 92.2\textsubscript{$\pm$0.4} & 78.7\textsubscript{$\pm$1.0} & 91.1\textsubscript{$\pm$0.3} & 84.5 \\
        FourierFT & 0.024M & 94.2\textsubscript{$\pm$0.3} & 90.0\textsubscript{$\pm$0.3} & \underline{63.8}\textsubscript{$\pm$1.6} & 92.2\textsubscript{$\pm$0.1} & 79.1\textsubscript{$\pm$0.2} & 90.8\textsubscript{$\pm$0.4} & 85.0 \\
        \textbf{SCLoRA} & 0.3M & \underline{95.0}\textsubscript{$\pm$0.2} & \underline{90.6}\textsubscript{$\pm$1.1} & \textbf{65.1}\textsubscript{$\pm$0.1} & \textbf{93.4}\textsubscript{$\pm$0.2} & \textbf{83.0}\textsubscript{$\pm$1.0} & \underline{91.3}\textsubscript{$\pm$0.1} & \textbf{86.4} \\
        \bottomrule
    \end{tabular}
    }
    \caption{Comparison of various methods with RoBERTa\textsubscript{base} on GLUE tasks with the experimental setup from \citet{gao2024fourierft}. The results for the baselines are copied from~\citet{gao2024fourierft}. The best performance is set in \textbf{bold}, and the second best is set in \underline{underline}.}
    
\end{table*}

Discrete Fourier Transform (DFT)~\citep{briggs1995dft} converts a finite sequence of equally-spaced samples of a function into a same-length sequence of equally-spaced samples of the discrete-time Fourier transform (DTFT), which is a complex-valued function of frequency. Recent studies have proposed the DFT-based parameter efficient fine-tuning method. FouRA~\citep{borse2024foura} transforms the hidden vectors in the latent space into the singular domain using DFT and compute LoRA, followed by reconstruction through inverse DFT. 
FourierFT~\citep{gao2024fourierft} considers the adapter as a 2D spatial-domain matrix and transforms into a 2D DFT spectrum. 
Therefore, existing methods have focused on DFT-based approaches to either flexibly select adapter ranks depending on the input or reduce the number of parameters.
On the other hand, \textbf{SCLoRA} leverages a parameterized SVD in the parameter space, and injects the singular components with spectral clipping to achieve effective adaptation and mitigation of catastrophic forgetting.

We conduct additional experiments following experimental setup of FourierFT~\citep{gao2024fourierft}, one of the LoRA variants that employs the concept of DFT. Strictly following~\citep{gao2024fourierft}, we fine-tune RoBERTa\textsubscript{base}, adapting only the query and value projection matrices using LoRA. We maintain the same experimental settings as the original work, while only searching for the learning rate from $\{\num{1e-3}, \num{8e-4}, \num{6e-4}, \}$, $\gamma$ from $\{\num{1e-2}, \num{3e-3}, \num{1e-3}\}$, and $q$ from $\{0.25, 0.5, 0.75, 1.0\}$ and the results are reported in~\Cref{app:tab_nlu_fourierft_config}.

\section{How does the Adapters $\Delta W$ Compared to $W$?}

\begin{table*}[!t]
\label{tab:frob_norm}
\centering
\small
\setlength{\tabcolsep}{3pt}
\begin{tabular}{c | ccc | c | c | cc} \toprule
\multirow{2}{*}{Model} & \multicolumn{3}{c|}{$\Vert U^\top W V \Vert_F$} & \multirow{2}{*}{$\Vert U_{W_q}^\top \Delta W V_{W_q} \Vert_F$}  & \multirow{2}{*}{$\Vert \Delta W \Vert_F$} &  \multirow{2}{*}{\texttt{Factor}\textsubscript{$W \rightarrow \Delta W$}} & \multirow{2}{*}{\texttt{Factor}\textsubscript{$\Delta W\rightarrow W$}}  \\ \cmidrule{2-4}
                      &  $\Delta W_q$  & $W_q$ & Random & & & &  \\ \midrule
          LoRA        & 0.48 & 11.22 & 0.32 & 0.16 & 3.81 & 7.94 & 23.82 \\
          PiSSA       & 0.38 & 11.22 & 0.35 & 0.11 & 2.49 & 6.54 & 22.60 \\
          MiLoRA      & 0.45 & 11.22 & 0.35 & 0.08 & 3.11 & 6.91 & 38.86 \\
          \textbf{SCLoRA}      & 0.36 & 11.22 & 0.38 & 0.03 & 0.94 & 2.60 & 31.18 \\ \midrule
\end{tabular}
\caption{The Frobenius norm of $U^\top WV$, where $U$ and $V$ are the left and right top $r$ singular vector directions of either: (1) $\Delta W_q$, (2) $W_q$, or (3) a random matrix. (4) The Frobenius norm of $U^\top \Delta W V$, where $U$ and $V$ are from $W_q$. (5) The Frobenius norm of $\Delta W$. (6,7) The introduced factors. The weights are taken from the last query layer of RoBERTa\textsubscript{base}, fine-tuned on STS-B dataset with $r=8$.}
\end{table*}

We explore the relationship between $\Delta W$ and $W$ by measuring the correlation between $\Delta W$ and $W$ as well as the magnitude of $\Delta W$ in comparison to its corresponding directions in $W$. To do so, we introduce two key factors:

\begin{itemize}
    \item \texttt{Factor}\textsubscript{$W\rightarrow\Delta W$} is a factor formulated as ${\Vert \Delta W\Vert_F}/{\Vert U_{\Delta W}^\top W V_{\Delta W} \Vert_F}$, which indicates the ratio of the norm of difference over the norm of projected $W$ on the $r$-dimensional subspace of $\Delta W$. This factor is also called \textit{amplification factor}~\citep{hu2021lora}, measuring how the new information of $\Delta W$ is related to the existing information of $W$. A larger ratio refers that the task-specific information of $W$ has been amplified in $\Delta W$.
    \item \texttt{Factor}\textsubscript{$\Delta W\rightarrow W$} is a factor formulated as ${\Vert \Delta W\Vert_F}/{\Vert U_{W}^\top \Delta W V_{W} \Vert_F}$, which is the ratio of the norm of difference over the norm of projected $\Delta W$ on the $r$-dimensional subspace of $W$. It indicates the extent to which the change aligns with $W$. A larger ratio refers that $\Delta W$ has learned new information that is not present in $W$.
\end{itemize}

Following~\citet{hu2021lora}, we project $W$ onto the $r$-dimensional subspace of $\Delta W$ by computing $U^\top W V$, where $U$ and $V$ are the left and right singular vectors of $\Delta W$, $W$, and the random matrix. Additionally, we project $\Delta W$ onto the subspace of $W$ by computing $U^\top \Delta W V$. As shown in \Cref{tab:frob_norm}, \textbf{SCLoRA} and other methods exhibit similar Frobenius norms when $W$ is projected onto the subspace of $\Delta W$, $W$ and random matrix. However, compared to the baselines, the projection of $\Delta W$ onto the subspace of $W$ in \textbf{SCLoRA} shows the lowest correlation with a value of 0.02, which is less than half of the smallest baseline. This suggests that \textbf{SCLoRA} processes the existing information in $W$ similarly to other methods, while being better at learning independent new information without relying on the existing information in $W$. Furthermore, considering the Frobenius norm of $\Delta W$, both LoRA and PiSSA exhibit a large $\texttt{Factor}_{W \rightarrow \Delta W}$ and a small $\texttt{Factor}_{\Delta W \rightarrow W}$, indicating that $\Delta W$ primarily amplifies information already present in $W$. MiLoRA also shows a large $\texttt{Factor}_{\Delta W \rightarrow W}$, but this results from the large magnitude of $\Delta W$, leading to significant changes from the pre-trained weights. In contrast, \textbf{SCLoRA} exhibits a relatively small $\texttt{Factor}_{W \rightarrow \Delta W}$ of 4.25 but a large $\texttt{Factor}_{\Delta W \rightarrow W}$ of 46.77. Given the small magnitude of $\Delta W$, this indicates that \textbf{SCLoRA} stands out for its ability to learn new information that is not already in $W$ with minimal deviation from the pre-trained weights. 
We define the following ratio based on these two factors:
\begin{align}
\texttt{Ratio} = \frac{\texttt{Factor}_{\Delta W \rightarrow W}}{\texttt{Factor}_{W \rightarrow \Delta W}}.
\end{align}
A higher Ratio indicates that $\Delta W$ contains new components that do not lie within the subspace of the pre-trained weights.
This value is significantly higher than those of existing LoRA-based methods, demonstrating that \textbf{SCLoRA} is not merely amplifying existing directions, but instead learning new task-specific directions that do not exist in the pre-trained weights.
According to~\Cref{tab:frob_norm}, \textbf{SCLoRA} achieves a \texttt{Ratio} of $31.18/2.60\simeq11.99$, which is significantly higher than other LoRA-based methods. This provides strong evidence that \textbf{SCLoRA} is not merely amplifying existing directions, but rather learning new information that is not already in $W$ from the pre-trained weights.
In summary, \textbf{SCLoRA} preserves pre-trained knowledge while injecting fine-grained information into low-alignment regions, thereby naturally inducing a divergence of subspaces during fine-tuning.

\section{Experimental Setup}
\subsection{Natural Language Understanding}
\subsubsection{Dataset description}\label{app:dataset_description_nlu}
We describe the benchmark datasets of GLUE~\citep{wang2018glue} below~\footnote{\url{https://huggingface.co/datasets/nyu-mll/glue}}.
\begin{itemize}
    \item \textbf{CoLA.} The Corpus of Linguistic Acceptability~\citep{warstadt2019cola} provides a dataset of English sentences, where each sentence is judged for grammatical acceptability based on data from books and journal articles. The objective is a binary classification to determine whether a sentence is grammatically correct or incorrect. The dataset consists of 8.5k samples for training, 1k samples for validation, and 1k samples for test.
    \item \textbf{SST-2.} The Stanford Sentiment Treebank~\citep{socher2013sst2} includes sentences from movie reviews, along with human-provided sentiment annotations. The goal is to classify the sentiment of each sentence as either positive or negative. The dataset consists of 67k samples for training, 872 samples for validation, and 1.8k samples for test.
    \item \textbf{MRPC.} The Microsoft Research Paraphrase Corpus~\citep{dolan2005mrpc} contains pairs of sentences automatically extracted from online news sources. Human annotators label each pair, and the task is to identify whether the two sentences in a pair convey the same meaning. The dataset consists of 3.7k samples for training, 408 samples for validation, and 1.7k samples for test.
    \item \textbf{QQP.} The Quora Question Pairs dataset~\citep{chen2018qqp} consists of question pairs taken from Quora, a community-driven question-and-answer platform. The task is to determine if two given questions are semantically identical. The dataset consists of 364k samples for training, 40k samples for validation, and 391k samples for test.
    \item \textbf{MNLI.} The Multi-Genre Natural Language Inference Corpus~\citep{williams2017mnli} includes sentence pairs with textual entailment annotations collected through crowdsourcing. Given a premise and a hypothesis, the task is to predict whether the premise entails the hypothesis, contradicts it, or is neutral. The dataset includes both in-domain and cross-domain evaluations using a hidden test set. The dataset consists of 393k samples for training, 20k samples for validation, and 20k samples for test.
    \item \textbf{QNLI.} The Question-Answering Natural Language Inference dataset~\citep{wang2018qnli} consists of question-paragraph pairs from which an answer must be found. The task involves determining whether a specific sentence from the paragraph answers the corresponding question. The dataset consists of 108k samples for training, 5.7k samples for validation, and 5.7k samples for test.
    \item \textbf{RTE.} The Recognizing Textual Entailment dataset~\citep{bentivogli2009rte} comes from a series of annual challenges focusing on textual entailment. The task is to classify sentence pairs as either entailment or non-entailment. The dataset consists of 2.5k samples for training, 276 samples for validation, and 3k samples for test.
    \item \textbf{STS-B.} The Semantic Textual Similarity Benchmark~\citep{cer2017stsb} features sentence pairs drawn from various sources, including news headlines and image captions, with human-assigned similarity scores. The task is a regression problem where the model must predict a similarity score ranging from 0 to 5. The dataset consists of 7k samples for training, 1.5k samples for validation, and 1.4k samples for test.
\end{itemize}

\subsubsection{Experimental setup}\label{app:setup_nlu}
We evaluate \textbf{SCLoRA} on the General Language Understanding Evaluation (GLUE) benchmark~\citep{wang2018glue}, which includes 3 categories of natural language understanding tasks: i) single-sentence (CoLA and SST-2); ii) similarity and paraphrasing (MRPC, QQP, and STS-B); iii) natural language inference tasks (MNLI, QNLI, and RTE). For a fair comparison, following \citet{hu2021lora}, we adopt the pre-trained RoBERTa\textsubscript{base} as the backbone model. We use  1 GPU of NVIDIA RTX A6000 for experiments. We report Matthews correlation for CoLA, Spearman correlations for STS-B, and accuracy scores for the other tasks. We conduct the experiments in Huggingface Framework~\footnote{\url{https://github.com/huggingface/transformers}}.

\subsubsection{Hyperparameters}\label{app:best_hyperparameter_nlu}

\begin{table*}[h]
\label{tab:parmeter_nlu}
\centering
\small
\setlength{\tabcolsep}{2.5pt}
\begin{tabular}{l ccccccc} \toprule
Dataset & Learning rate & Batch size & \#Epochs & Metric &  $q$ & $\gamma$ & How to initialize $U,V$ \\ \midrule

CoLA & \num{4e-4} & 32 & 25 & Matthews correlation & 0.5 & \num{3e-2} & random $r$ singular vectors \\
MNLI & \num{5e-4} & 32 & 7 & Accuracy & 0.5 & \num{1e-1} & 0, random Gaussian \\
MRPC & \num{4e-4} & 16 & 30 & Accuracy & 0.5 & \num{1e-2} & random $r$ singular vectors  \\
QNLI & \num{4e-4} & 32 & 5 & Accuracy & 0.25 & \num{1e-1} & 0, random Gaussian \\
QQP & \num{5e-4} & 32 & 5 & Accuracy & 0.25 & \num{1e-3} & 0, random Gaussian  \\
RTE & \num{5e-4} & 32 & 50 & Accuracy & 1.0 & \num{5e-2} & 0, random Gaussian \\
SST-2 & \num{5e-4} & 32 & 24 & Accuracy & 1.0 & \num{1e-1} & 0, random Gaussian \\ 
STS-B & \num{4e-4} & 32 & 25 & Pearson correlation & 0.25 & \num{1e-1} & 0, random Gaussian \\ \bottomrule

\end{tabular}
\caption{Best hyperparameters for \textbf{SCLoRA} in natural language understanding for RoBERTa\textsubscript{base}}

\end{table*}

\begin{table*}[h]
\label{tab:parmeter_deberta_r8}
\centering
\small
\setlength{\tabcolsep}{5pt}
\begin{tabular}{l cccccc} \toprule
Dataset & Learning rate & Batch size & \#Epochs & Metric & $q$ & $\gamma$ \\ \midrule

CoLA & \num{8e-4} & 32 & 25 & Matthews correlation & 1.0 & \num{5e-1}\\
MNLI & \num{5e-4} & 32 & 7 & Accuracy & 0.75 & \num{1e-2}  \\
MRPC & \num{1e-3} & 32 & 30 & Accuracy & 0.75 & \num{5e-1}  \\
QNLI & \num{5e-4} & 32 & 5 & Accuracy & 0.75 & \num{1e-1}  \\
QQP & \num{8e-4} & 32 & 5 & Accuracy & 0.25 & \num{1e-2}  \\
RTE & \num{1.2e-3} & 32 & 50 & Accuracy & 0.75 & \num{1e-1}  \\
SST-2 & \num{8e-4} & 32 & 24 & Accuracy & 1.0 & \num{1e-1}  \\ 
STS-B & \num{2.2e-3} & 32 & 25 & Pearson correlation & 0.5 & \num{5e-1}  \\ \bottomrule

\end{tabular}
\caption{Best hyperparameters for \textbf{SCLoRA} in natural language understanding for DeBERTaV3\textsubscript{base} with $r=8$}

\end{table*}

\begin{table*}[h]
\label{tab:parmeter_deberta_r2}
\centering
\small
\setlength{\tabcolsep}{2.5pt}
\begin{tabular}{l cccccc} \toprule
Dataset & Learning rate & Batch size & \#Epochs & Metric & $q$ & $\gamma$ \\ \midrule

CoLA & \num{8e-4} & 32 & 25 & Matthews correlation & 1.0 & \num{1.1e-1}\\
MNLI & \num{5e-4} & 32 & 7 & Accuracy & 0.75 & \num{1e-1}  \\
MRPC & \num{1e-3} & 32 & 30 & Accuracy & 0.5 & \num{1e-2}  \\
QNLI & \num{7e-4} & 32 & 5 & Accuracy & 0.75 & \num{1e-1}  \\
QQP & \num{8e-4} & 32 & 5 & Accuracy & 0.25 & \num{5e-3}  \\
RTE & \num{1.2e-3} & 32 & 50 & Accuracy & 0.75 & \num{1e-1}  \\
SST-2 & \num{8e-4} & 32 & 24 & Accuracy & 1.0 & \num{5e-1}  \\ 
STS-B & \num{2.2e-3} & 32 & 25 & Pearson correlation & 1.0 & \num{6e-1}  \\ \bottomrule

\end{tabular}
\caption{Best hyperparameters for \textbf{SCLoRA} in natural language understanding for DeBERTaV3\textsubscript{base} with $r=2$}

\end{table*}

To tune \textbf{SCLoRA} with RoBERTa\textsubscript{base}, we search for the learning rate from $\{\num{4e-4}, \num{5e-4}\}$, $q$ from $\{0.25, 0.5, 0.75, 1.0\}$ and $\gamma$ from $\{\num{1e-1}, \num{7e-2}, \num{5e-2}, \num{3e-2}, \num{1e-2}, \num{1e-3}\}$. The learnable singular vectors $U/V$ can be initialized as i) random $r$ singular vectors of $W_0$, ii) $U$ with zeros, $V$ with random Gaussian initialization. To tune \textbf{SCLoRA} with DeBERTaV3\textsubscript{base}, we search $q$ from $\{0.25, 0.5, 0.75, 1.0\}$ and $\gamma$ from $\{\num{1e-1}, \num{1.1e-1}, \num{1e-2}, \num{5e-1}, \num{6e-1}\}$. The learnable singular vectors $U/V$ are be initialized as $U$ with zeros and $V$ with random Gaussian initialization. We report the best hyperparameters of \textbf{SCLoRA} in \Cref{tab:parmeter_nlu} to \Cref{tab:parmeter_deberta_r2}.

\subsubsection{Experimental Result with standard deviations}\label{app:main_nlu_std}
We report the experimental results on GLUE tasks with standard deviation of Roberta\textsubscript{base} and DeBERTa\textsubscript{base}V3 in \Cref{tab:main_nlu_std} and \Cref{tab:main_nlu_deberta_std}, respectively. Note that the results for \Cref{tab:main_nlu_deberta_std} are copied from~\citet{zhang2023adalora}, the results with standard deviation are reported only for \textbf{SCLoRA}.

\begin{table*}[h]
\label{tab:main_nlu_std}
\centering
\small
\setlength{\tabcolsep}{1.3pt}
\begin{tabular}{l | ccccccccc} \toprule
% Method & MNLI  & SST-2  & CoLA  & QQP  & QNLI  & RTE  & MRPC  & STS-B & Avg. \\ \midrule
        Method  &  MNLI & SST-2 & CoLA & QQP & QNLI & RTE & MRPC & STS-B & Avg. \\ \midrule
        LoRA  & 87.93\textsubscript{$\pm$0.15} & 94.80\textsubscript{$\pm$0.11} & 64.49\textsubscript{$\pm$0.64} & 90.94\textsubscript{$\pm$0.04} & 92.73\textsubscript{$\pm$0.11} & 80.39\textsubscript{$\pm$0.74} & 89.05\textsubscript{$\pm$0.12} & 90.87\textsubscript{$\pm$0.08} & 86.40 \\
        AdaLoRA   & 87.90\textsubscript{$\pm$0.09} & 94.80\textsubscript{$\pm$0.13} & 62.41\textsubscript{$\pm$0.66} & 90.16\textsubscript{$\pm$0.07} & 92.67\textsubscript{$\pm$0.12} & 81.59\textsubscript{$\pm$0.72} & 89.38\textsubscript{$\pm$0.57} & 91.08\textsubscript{$\pm$0.11} & 86.25 \\
        PiSSA    & 87.95\textsubscript{$\pm$0.01} & 94.53\textsubscript{$\pm$0.19} & 64.66\textsubscript{$\pm$0.98} & 90.97\textsubscript{$\pm$0.05} & 92.53\textsubscript{$\pm$0.14} & 79.18\textsubscript{$\pm$0.68} & 89.79\textsubscript{$\pm$1.41} & 90.96\textsubscript{$\pm$0.08} & 86.32 \\
        MiLoRA     & 87.88\textsubscript{$\pm$0.11} & 94.69\textsubscript{$\pm$0.30} & 64.31\textsubscript{$\pm$0.97} & 91.02\textsubscript{$\pm$0.04} & 92.96\textsubscript{$\pm$0.21} & 81.35\textsubscript{$\pm$1.23} & 89.30\textsubscript{$\pm$0.12} & 90.96\textsubscript{$\pm$0.05} & 86.56 \\
        % AdaLoRA   & 87.21\textsubscript{$\pm$0.05} & 95.07\textsubscript{$\pm$0.32} & 61.37\textsubscript{$\pm$1.01} & 89.75\textsubscript{$\pm$0.10} & 92.54\textsubscript{$\pm$0.08} & 81.11\textsubscript{$\pm$0.85} & 89.05\textsubscript{$\pm$0.31} & 90.62\textsubscript{$\pm$0.11} & 85.84 \\
        % rsLoRA  & 85.26\textsubscript{$\pm$3.34} & 92.35\textsubscript{$\pm$0.33} & 65.17\textsubscript{$\pm$0.82} & 70.76\textsubscript{$\pm$10.71} & 92.48\textsubscript{$\pm$0.27} & 79.54\textsubscript{$\pm$1.33} & 89.05\textsubscript{$\pm$0.31} & 90.88\textsubscript{$\pm$0.13} & 83.19 \\
        LoRA+    & 86.96\textsubscript{$\pm$0.65} & 93.92\textsubscript{$\pm$0.00} & 63.32\textsubscript{$\pm$0.69} & 90.69\textsubscript{$\pm$0.07} & 92.77\textsubscript{$\pm$0.01} & 81.59\textsubscript{$\pm$1.18} & 88.97\textsubscript{$\pm$0.35} & 90.84\textsubscript{$\pm$0.14} & 86.13 \\
        LoRA-GA & 85.18\textsubscript{$\pm$0.16} & 93.16\textsubscript{$\pm$0.18} & 62.09\textsubscript{$\pm$0.80} & 88.57\textsubscript{$\pm$0.08} & 91.64\textsubscript{$\pm$0.13} & 76.77\textsubscript{$\pm$4.46} & 89.54\textsubscript{$\pm$0.62} & 90.87\textsubscript{$\pm$0.09} & 84.73 \\
        CorDA & 86.28\textsubscript{$\pm$0.27} & 94.38\textsubscript{$\pm$0.53} & 62.43\textsubscript{$\pm$0.94} & 90.14\textsubscript{$\pm$0.11} & 92.36\textsubscript{$\pm$0.24} & 78.10\textsubscript{$\pm$0.55} & 89.87\textsubscript{$\pm$0.93} & 90.68\textsubscript{$\pm$0.14} & 85.53 \\
        DoRA      & 87.81\textsubscript{$\pm$0.04} & 95.11\textsubscript{$\pm$0.19} & 64.23\textsubscript{$\pm$0.10} & 90.65\textsubscript{$\pm$0.11} & 92.93\textsubscript{$\pm$0.10} & 81.35\textsubscript{$\pm$0.95} & 89.54\textsubscript{$\pm$0.23} & 91.01\textsubscript{$\pm$0.16} & 86.58 \\
        \rowcolor{blue!10}\textbf{SCLoRA}   & 87.95\textsubscript{$\pm$0.12} & 95.37\textsubscript{$\pm$0.29} & 64.79\textsubscript{$\pm$0.20} & 90.76\textsubscript{$\pm$0.08} & 93.09\textsubscript{$\pm$0.14} & 83.15\textsubscript{$\pm$0.17} & 90.32\textsubscript{$\pm$0.86} & 91.22\textsubscript{$\pm$0.05} & 87.08 \\
                
        \bottomrule
\end{tabular}
\caption{Comparison of various methods on GLUE tasks with different random seeds.}

\end{table*}

\begin{table*}[t]
    \centering
    \small
    \setlength{\tabcolsep}{0.8pt}
    \label{tab:main_nlu_deberta_std}
    \begin{tabular}{l |  c | c c c c c c c c c}
        \toprule
        Method  & \# Params & MNLI & SST-2 & CoLA & QQP & QNLI & RTE & MRPC & STS-B & Avg. \\ \midrule
        Full FT      & 184M & 89.90    & 95.63    & 69.19 & 92.40    & 94.03 & 83.75 & 89.46 & 91.60 & 88.09 \\ \midrule
        BitFit       & 0.10M & 89.37    & 94.84    & 66.96 & 88.41    & 92.24 & 78.70 & 87.75 & 91.35 & 86.02 \\
        HAdapter     & 1.22M & 90.13   & 95.53    & 68.64 & 91.91    & 94.11 & 84.48 & 89.95 & 91.48 & 88.12 \\
        PAdapter     & 1.18M & 90.33   & 95.61    & 68.77 & 92.04    & 94.29 & 85.20 & 89.46 & 91.54 & 88.24 \\
        LoRA\textsubscript{r=8} & 1.33M & 90.65  & 94.95    & 69.82 & 91.99    & 93.87 & 85.20 & 89.95 & 91.60 & 88.34 \\
        AdaLoRA & 1.27M & 90.76 & 96.10 & 71.45
        & 92.23 & 94.55 & 88.09 & 90.69 & 91.84 & 89.31 \\
        \rowcolor{blue!10}\textbf{SCLoRA}  &  1.33M &  90.36\textsubscript{$\pm$0.03} & 96.33\textsubscript{$\pm$0.41}  & 71.49\textsubscript{$\pm$0.60} & 92.33\textsubscript{$\pm$0.01}  & 94.57\textsubscript{$\pm$0.05}  & 89.41\textsubscript{$\pm$0.17}  &  91.58\textsubscript{$\pm$1.21} & 92.19\textsubscript{$\pm$0.07} & 89.78 \\
        \midrule
        HAdapter     & 0.61M & 90.12    & 95.30    & 67.87 & 91.65     & 93.76 & 85.56 & 89.22 & 91.30 & 87.93 \\
        PAdapter     & 0.60M & 90.15    & 95.53    & 69.48 & 91.62    & 93.98 & 84.12 & 89.22 & 91.52 & 88.04 \\
        HAdapter     & 0.31M & 90.10    & 95.41    & 67.65 & 91.54    & 93.52 & 83.39 & 89.25 & 91.31 & 87.60 \\
        PAdapter     & 0.30M & 89.89    & 94.72    & 69.06 & 91.40    & 93.87 & 84.48 & 89.71 & 91.38 & 87.90 \\
        LoRA\textsubscript{r=2} & 0.33M & 90.30  & 94.95    & 68.71 & 91.61     & 94.03 & 85.56 & 89.71 & 91.68 & 88.15 \\
        AdaLoRA & 0.32M & 90.66 & 95.80 & 70.04
        & 91.78 & 94.49 & 87.36 & 90.44 & 91.63 & 88.86 \\
        \rowcolor{blue!10}\textbf{SCLoRA}  & 0.33M & 90.66\textsubscript{$\pm$0.02} & 96.18\textsubscript{$\pm$0.14} & 71.83\textsubscript{$\pm$1.08} & 91.82\textsubscript{$\pm$0.07} & 94.50\textsubscript{$\pm$0.00} & 89.89\textsubscript{$\pm$1.47} & 91.83\textsubscript{$\pm$0.90} & 92.00\textsubscript{$\pm$0.23} & 89.83 \\
        \bottomrule

    \end{tabular}
    \caption{Performance comparison of various methods with DeBERTaV3\textsubscript{base} on GLUE tasks with different random seeds. The results for the baselines are copied from~\citep{zhang2023adalora}.}
    
\end{table*}

\subsection{Question Answering}
\subsubsection{Dataset description}\label{app:dataset_description_qa}
We describe the benchmark dataset of SQuAD~\citep{rajpurkar2016squad,rajpurkar2018squadv2}. The Stanford Question Answering Dataset (SQuAD) is a benchmark for reading comprehension, featuring questions based on Wikipedia articles. Each question is answered with a specific text segment (or span) from the corresponding passage, though some questions may have no answer at all.

\begin{itemize}
    \item \textbf{SQuADv1.1.}~\footnote{\url{https://huggingface.co/datasets/rajpurkar/squad}} Over 100,000 question-answer pairs derived from more than 500 articles. The dataset consists of 87,599 samples for training and 10,570 for validation.
    \item \textbf{SQuADv2.0.}~\footnote{\url{https://huggingface.co/datasets/rajpurkar/squad_v2}} Combines the 100,000 questions in SQuADv1.1 with over 50,000 unanswerable questions to closely resemble answerable ones. To perform well on SQuADv2.0, systems must not only provide correct answers when available but also recognize when a question cannot be answered based on the given passage and abstain from responding. The dataset consists of 130,319 samples for training and 11,873 for validation.
\end{itemize}

\subsubsection{Experimental setup}\label{app:setup_qa}

We evaluate \textbf{SCLoRA} on two question answering (QA) tasks: SQuAD v1.1~\citep{rajpurkar2016squad} and SQuADv2.0~\citep{rajpurkar2018squadv2}. Following~\citet{zhang2023adalora}, we fine-tune a pre-trained DeBERTaV3\textsubscript{base}~\citep{he2021debertav3} with \textbf{SCLoRA}  and set the rank $r$ of LoRA as $\{1,2,4,8\}$. These tasks are considered as a sequence labeling problem, where the goal is to predict the probability of each token being the start and end of the answer span. We measured the performance of model using the Exact Match (EM) and F1 metrics. We use 1 GPU of NVIDA RTX 3090 24GB for experiments. We conduct the experiments in Huggingface Framework.

\subsubsection{Hyperparameters}\label{app:best_hyperparameter_qa}
\begin{table}[h]
\label{tab:parmeter_qa}
\centering
\small
\setlength{\tabcolsep}{5pt}
\begin{tabular}{l  c  ccc} \toprule
Dataset & $r$ & Learning rate & $q$ & $\gamma$ \\ \midrule
\multirow{4}{*}{SQuADv1.1} & 1 & \num{2e-3} & 0.5  &  \num{5e-1} \\
                           & 2 & \num{2e-3} &  0.5 & \num{1e-1}  \\
                           & 4 & \num{1e-3}  & 0.5 & \num{1e-2} \\
                           & 8 & \num{2e-3} &  0.5 & \num{1e-1} \\ \midrule
\multirow{4}{*}{SQuADv2.0} & 1 & \num{2e-3} & 0.75 & \num{7e-2} \\
                           & 2 &\num{2e-3}  & 0.75 & \num{5e-2} \\
                           & 4 &\num{3e-3}   & 0.75 & \num{5e-1} \\ 
                           & 8 & \num{3e-3}& 0.75 & \num{5e-1}  \\ \bottomrule
\end{tabular}
\caption{Best hyperparameters for \textbf{SCLoRA} in question answering}

\end{table}

To tune \textbf{SCLoRA}, we fix the batch size as 16, and train 10 epochs for SQuADv1.1 and 12 epochs for SQuADv2.0, respectively. We search for the learning rate from $\{\num{1e-3}, \num{2e-3}, \num{3e-3}\}$, $q$ from $\{0.5, 0.75, 1.0\}$ and $\gamma$ from $ \num{5e-1}, \num{1e-1}, \num{7e-2}, \num{5e-2}, \num{1e-2}\}$. The learnable singular vectors $U/V$ are be initialized as $U$ with zeros and $V$ with random Gaussian initialization. We report the best hyperparameters of \textbf{SCLoRA} in \Cref{tab:parmeter_qa}.

\subsection{Commonsense Reasoning}
\subsubsection{Dataset description}\label{app:dataset_description_cr}
The commonsense reasoning tasks are intended to require the model to go beyond pattern recognition. Instead, the model should use ``common sense'' or world knowledge to make inferences. The commonsense reasoning tasks comprise 8 sub-tasks, each with a predefined training and testing set. 

\begin{itemize}
    \item \textbf{BoolQ.} The model answers yes/no questions about short passages, testing its ability to understand statements.
    \item \textbf{PIQA.} The model chooses the most plausible solution for a physical interaction, focusing on practical reasoning.
    \item \textbf{SIQA.} The model infers the most suitable outcome or rationale in everyday social contexts.
    \item \textbf{HellaSwag.} The model selects the most coherent continuation of a short scenario, emphasizing commonsense inference.
    \item \textbf{WinoGrande.} The model resolves ambiguous pronoun references that require broad commonsense to disambiguate.
    \item \textbf{ARC-e.} The model tackles elementary-level science questions assessing basic scientific knowledge.
    \item \textbf{ARC-c.} The model addresses harder, more nuanced science questions requiring deeper reasoning.
    \item \textbf{OBQA.} OpenBookQA. The model answers questions using a provided `open book' of facts, testing its ability to integrate and apply specific knowledge.
    
\end{itemize}

\subsubsection{Experimental setup}\label{app:setup_cr}
Following ~\citet{hu2023llmadapters}, we amalgamate the training datasets from all 8 tasks to create the final training dataset and conduct evaluations on the individual testing dataset for each task. 

\subsubsection{Hyperparameters}\label{app:best_hyperparameter_cr}

To tune \textbf{SCLoRA},  we search for the learning rate from $\{\num{3e-4}, \num{4e-4}\}$, $q$ from $\{0.5, 0.75, 1.0\}$ and $\gamma$ from $\{\num{5e-1}, \num{1e-1}\}$. The learnable singular vectors $U/V$ are initialized as $U$ with zeros, $V$ with random Gaussian initialization. We report the best hyperparameters of \textbf{SCLoRA} in \Cref{tab:parmeter_cr}. 

\begin{table}[h]
\label{tab:parmeter_cr}
\centering
\small
\setlength{\tabcolsep}{2.5pt}
\begin{tabular}{l ccccccc} \toprule
Model & Learning rate & Batch size & \#Epochs & $q$ & $\gamma$ \\ \midrule
LLaMA-7B & \num{3e-4} & 16 & 3  & 0.5 & 0.1 \\
LLaMA2-7B & \num{4e-4} & 16 & 3  & 0.75  & 0.5 \\
% LLaMA-7B & \num{2e-4} & 16 & 3  & 0.5 & \num{5e-1}  \\
% LLaMA2-7B & \num{3e-4} & 16 & 3 & 1.0 & \num{1e-2}  \\
 \bottomrule
\end{tabular}
\caption{Best hyperparameters for \textbf{SCLoRA} in commonsense reasoning}

\end{table}

\section{Spectral analysis of $\Delta W$}\label{sec:spectral_analysis}

\begin{figure}[t]
    \centering
    \subfigure[Spectral Norm]{\includegraphics[width=0.49\columnwidth]{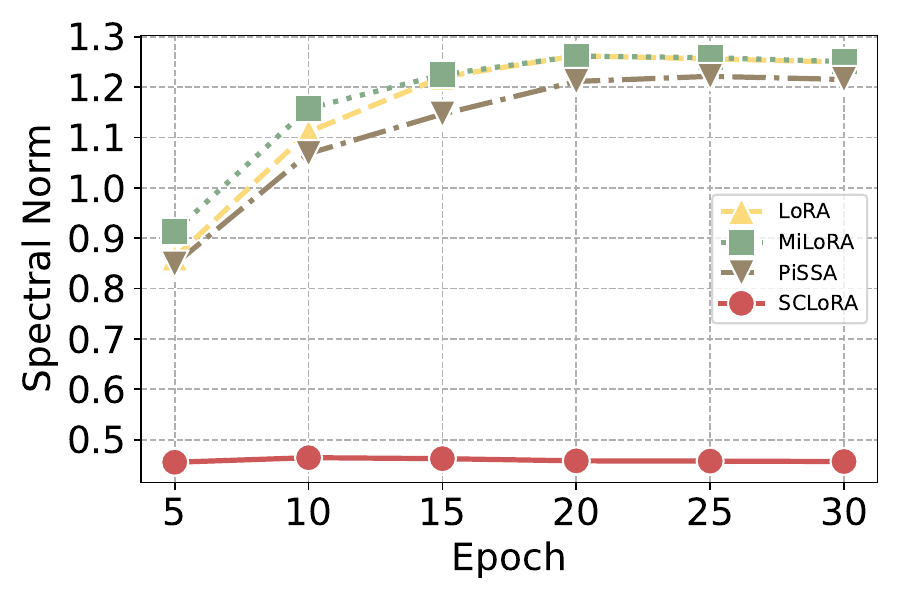}}
    \subfigure[Accuracy]{\includegraphics[width=0.49\columnwidth]{fig/5.Discussion/acc_drop/mrpc_acc.pdf}}\hfill
    % \subfigure[Evaluation Loss]{\includegraphics[width=0.31\textwidth]{fig/ICLR26-revision_figure/acc_drop/mrpc_loss.pdf}}
    \caption{Changes during fine-tuning RoBERTa\textsubscript{base} on the MRPC dataset: (a) Spectral norm of $\Delta W$ in the query layer; (b) accuracy on the pre-trained task (BookCorpus dataset).}
    \label{fig:catastrophic_forgetting}
\end{figure}

\Cref{pro:spectral_norm} demonstrates that the spectral norm of large weight matrices increases rapidly when adaptive optimizers are applied. However, \textbf{SCLoRA} prevents this increase by restricting the range of the singular value of $\Delta W$ during fine-tuning, ensuring that the spectral norm does not grow excessively. To empirically verify this difference, \Cref{fig:catastrophic_forgetting} (a) illustrates the evolution of the spectral norm of $\Delta W$ across various methods during the fine-tuning with MRPC data on RoBERTa\textsubscript{base}. While LoRA and its variants tend to increase the spectral norm during fine-tuning, \textbf{SCLoRA} maintains a smaller spectral norm. This suggests that, unlike other LoRA-based methods, \textbf{SCLoRA} adapts to fine-tuning tasks by effectively incorporating new information with the singular components with spectral clipping.
Furthermore, consistent with \Cref{thm:map}, we show that the uncontrolled growth of adapter singular values induces catastrophic forgetting, which is empirically confirmed in \Cref{fig:catastrophic_forgetting}(b). As fine-tuning progresses, the performance of baseline methods degrades sharply, whereas \textbf{SCLoRA} preserves approximately three-times higher performance, demonstrating its strong robustness against catastrophic forgetting.

\section{Further Experiments on Mitigating Catastrophic Forgetting}\label{app:sec_cf}

In this section, we validate \textbf{SCLoRA} on mitigating the catastrophic forgetting, followed by \Cref{sec:discussion}.

\subsection{Mitigating catastrophic forgetting with other models}

\begin{table*}[t]
\centering
\small
\begin{tabular}{l l cc cc cc cc cc}
\toprule
\multirow{2}{*}{Dataset} & \multirow{2}{*}{Method} 
& \multicolumn{2}{c}{SST-2} 
& \multicolumn{2}{c}{CoLA} 
& \multicolumn{2}{c}{RTE} 
& \multicolumn{2}{c}{QNLI} 
& \multicolumn{2}{c}{MRPC} \\
\cmidrule(lr){3-4}
\cmidrule(lr){5-6}
\cmidrule(lr){7-8}
\cmidrule(lr){9-10}
\cmidrule(lr){11-12}
& & Acc.\textsubscript{ft} & Acc.\textsubscript{pt} 
  & Acc.\textsubscript{ft} & Acc.\textsubscript{pt}
  & Acc.\textsubscript{ft} & Acc.\textsubscript{pt}
  & Acc.\textsubscript{ft} & Acc.\textsubscript{pt}
  & Acc.\textsubscript{ft} & Acc.\textsubscript{pt} \\
\midrule
    
\multirow{7}{*}{BookCorpus}
& Pre-trained & - & 61.64 & - & 61.64 & - & 61.64 & - & 61.64  & - & 61.64 \\
    & LoRA & 94.80 & 32.35 & 64.49 & 50.85 & 92.73 & 56.76 & 80.39 & 33.78 & 89.05 & 3.77 \\
    & PiSSA & 94.53 & 27.14 & 64.66 & 44.93 & 92.53 & 52.75 & 79.18 & 10.85 & 89.79 & 4.78 \\
    & MiLoRA & 94.69 & 27.24 & 64.31 & 51.38 & 92.96 & 52.67 & 81.35 & 25.99 & 89.30 & 2.50 \\
    & LoRA-GA & 93.16 & 0.11 & 62.09 & 18.47 & 91.64 & 0.26 & 76.77 & 2.94 & 89.54 & 0.03 \\
    & CorDA & 94.38 & 29.38 & 62.43 & 29.57 & 92.36 & 56.12 & 78.10 & 33.21 & 89.87 & 14.26 \\
    \rowcolor{blue!10}\cellcolor{white} & \textbf{SCLoRA} & \textbf{95.37} & \textbf{51.29} &\textbf{ 64.79} & \textbf{54.78} & \textbf{93.09} & \textbf{57.15} & \textbf{83.15} & \textbf{50.07} & \textbf{90.32} & \textbf{32.00 }\\
\midrule

\multirow{7}{*}{OpenWebText}
& Pre-trained & - & 68.21 & - & 68.21 & - & 68.21 & - & 68.21& - & 68.21 \\
    & LoRA & 94.80 & 54.33 & 64.49 & 63.67 & 92.73 & 62.88 & 80.39 & 53.47 & 89.05 & 18.36 \\
    & PiSSA & 94.53 & 48.49 & 64.66 & 60.28 & 92.53 & 58.96 & 79.18 & 45.97 & 89.79 & 17.92 \\
    & MiLoRA & 94.69 & 49.22 & 64.31 & 64.25 & 92.96 & 60.58 & 81.35 & 51.74 & 89.30 & 16.93 \\
    & LoRA-GA & 93.16 & 0.27 & 62.09 & 26.78 & 91.64 & 0.95 & 76.77 & 7.81 & 89.54 & 0.04 \\
    & CorDA & 94.38 & 50.36 & 62.43 & 59.56 & 92.36 & 63.37 & 78.10 & 54.07 & 89.87 & 40.10 \\    \rowcolor{blue!10}\cellcolor{white} &\textbf{SCLoRA} & \textbf{95.37} & \textbf{65.67} &\textbf{64.79} & \textbf{64.31} & \textbf{93.09} & \textbf{63.59} & \textbf{83.15} & \textbf{62.69} &\textbf{ 90.32} & \textbf{47.27 }\\
\midrule

\multirow{7}{*}{STORIES}
& Pre-trained & - & 69.49 & - & 69.49 & - & 69.49 & - & 69.49& - & 69.49 \\
    & LoRA & 94.80 & 40.13 & 64.49 & 61.87 & 92.73 & 62.89 & 80.39 & 45.90 & 89.05 & 8.42 \\
    & PiSSA & 94.53 & 37.27 & 64.66 & 59.20 & 92.53 & 57.15 & 79.18 & 31.86 & 89.79 & 7.75 \\
    & MiLoRA & 94.69 & 37.47 & 64.31 & 61.46 & 92.96 & 60.30 & 81.35 & 36.13 & 89.30 & 4.28 \\
    & LoRA-GA & 93.16 & 0.14 & 62.09 & 24.80 & 91.64 & 0.36 & 76.77 & 4.75 & 89.54 & 0.04 \\
    & CorDA & 94.38 & 34.17 & 62.43 & 52.48 & 92.36 & 63.41 & 78.10 & 53.56 & 89.87 & 23.46 \\
    \rowcolor{blue!10}\cellcolor{white}  & \textbf{SCLoRA} & \textbf{95.37} & \textbf{60.38} & \textbf{64.79} & \textbf{64.45} & \textbf{93.09} & \textbf{63.91} & \textbf{83.15} & \textbf{60.31} & \textbf{90.32} & \textbf{41.12} \\
\bottomrule
\end{tabular}
\caption{Comparison on catastrophic forgetting with RoBERTa as the backbone model. `Acc.\textsubscript{ft}' and `Acc.\textsubscript{pt}' denote the accuracies on the fine-tuning and pre-training tasks, respectively.}
\label{app:catastrophic_roberta_all}
\end{table*}

To further validate the effectiveness of \textbf{SCLoRA} in mitigating catastrophic forgetting, we report the performance of RoBERTa\textsubscript{base} models trained on SST-2, CoLA, RTE, and MRPC, along with the accuracy on their corresponding pre-training tasks using BookCorpus~\citep{zhu2015bookcorpus}, OpenWebText~\citep{Gokaslan2019OpenWebText}, and STORIES~\citep{trinh2018stories} datasets in \Cref{app:catastrophic_roberta_all}. Although the degree of catastrophic forgetting varies across existing LoRA models depending on the type of data, \textbf{SCLoRA} consistently adapts to new downstream tasks with improved fine-tuning performance while substantially preserving accuracy on the pre-training tasks. In particular, across all datasets, \textbf{SCLoRA} maintains much higher pre-training accuracy than LoRA while still achieving the best fine-tuned task performance.
Although MiLoRA modifies the minor components with the aim of preserving pre-trained knowledge, it still suffers from catastrophic forgetting during fine-tuning due to uncontrolled spectral growth. In addition, both CorDA and LoRA-GA exhibit even more severe forgetting, as they involve uncontrolled spectral updates and initialize the adapters using a subset of calibration samples from the fine-tuning task. This initialization is biased toward the downstream distribution, making the model more prone to drifting away from the pre-trained knowledge. In particular, LoRA-GA is designed to approximate the full gradient of the pre-trained weights on the fine-tuning task, and prior studies have shown that full-gradient adaptation tends to induce even greater forgetting~\citep{biderman2024loralearnless}, which explains its more pronounced degradation of pre-trained knowledge.
These results demonstrate that \textbf{SCLoRA} effectively mitigates catastrophic forgetting without sacrificing downstream effectiveness.

\subsection{Catastrophic forgetting with continual Learning}

While our primary analysis focuses on single-task fine-tuning scenarios, catastrophic forgetting is inherently a continual learning problem. To more rigorously evaluate the effectiveness of \textbf{SCLoRA} in achieving stable adaptation while mitigating catastrophic forgetting, we therefore conduct additional experiments in continual learning settings. Specifically, we consider both a GLUE-based sequential fine-tuning setup and a standardized continual learning benchmark proposed in prior work.

\subsubsection{Continual learning on GLUE task}

\begin{table*}[h!]
\centering
\small
\begin{tabular}{lcccccccc}
\toprule
\multirow{2}{*}{Method} &
\multicolumn{2}{c}{SST-2} &
\multicolumn{2}{c}{CoLA} &
\multicolumn{2}{c}{RTE} &
\multicolumn{2}{c}{MRPC} \\
\cmidrule(lr){2-3}
\cmidrule(lr){4-5}
\cmidrule(lr){6-7}
\cmidrule(lr){8-9}
&
{Acc.\textsubscript{ft}} &
{Acc.\textsubscript{pt}} &
{Acc.\textsubscript{ft}} &
{Acc.\textsubscript{pt}} &
{Acc.\textsubscript{ft}} &
{Acc.\textsubscript{pt}} &
{Acc.\textsubscript{ft}} &
{Acc.\textsubscript{pt}} \\
\midrule
Pre-train & -- & 61.64 & -- & 61.64 & -- & 61.64 & -- & 61.64 \\
\midrule
LoRA & 94.50 & 45.15 & 59.74 & 48.51 & 71.00 & 40.47 & 88.97 & 33.38 \\
MiLoRA & 94.76 & 48.64 & 59.77 & 52.64 & 71.00 & 47.33 & 88.24 & 29.77 \\
\rowcolor{blue!10} \textbf{SCLoRA} & \textbf{95.37} & \textbf{51.76} & \textbf{60.77} & \textbf{54.04} & \textbf{79.18} & \textbf{52.44} & \textbf{89.30} & \textbf{45.08} \\
\bottomrule
\end{tabular}
\caption{Continual learning on GLUE. RoBERTa\textsubscript{base} are sequentially fine-tuned on SST-2 $\rightarrow$ CoLA $\rightarrow$ RTE $\rightarrow$ MRPC, and after each task we report both downstream accuracy and pre-training accuracy on BookCorpus.}
\end{table*}

In the GLUE-based continual learning experiments, we sequentially fine-tune $\Delta W$ across multiple downstream tasks. For each task, hyperparameters such as the number of training epochs and batch size follow the baseline and reported optimal settings to ensure a fair comparison. After completing fine-tuning on each task, we evaluate both the performance on the corresponding downstream task and the accuracy on the pretraining dataset, BookCorpus. Compared with LoRA and the SVD-based method MiLoRA, \textbf{SCLoRA} exhibits a substantially slower degradation of pretraining performance while consistently achieving stronger adaptation to downstream tasks throughout the sequence. These results indicate that \textbf{SCLoRA} effectively preserves pretrained knowledge even when learning multiple tasks sequentially, enabling more stable and reliable task adaptation.

\subsubsection{Continual learning with benchmark}

\begin{table}[t]
\centering
\small
\begin{tabular}{lcccc}
\toprule
Method & Order-1 & Order-2 & Order-3 & Avg \\
\midrule
SeqFT   & 18.9 & 24.9 & 41.7 & 28.5 \\
SeqLoRA & 44.6 & 32.7 & 53.7 & 43.7 \\
IncLoRA & 66.0 & 64.9 & 68.3 & 66.4 \\
Replay  & 55.2 & 56.9 & 61.3 & 57.8 \\
EWC     & 48.7 & 47.7 & 54.5 & 50.3 \\
LwF     & 54.4 & 53.1 & 49.6 & 52.3 \\
L2P     & 60.3 & 61.7 & 61.1 & 60.7 \\
LFPT5   & 67.6 & 72.6 & \textbf{77.9} & 72.7 \\
O-LoRA  & 75.4 & 75.7 & 76.3 & 75.8 \\
\rowcolor{blue!10} \textbf{SCLoRA} &
\textbf{76.9}\textsubscript{$\pm$1.0}&\textbf{76.8}\textsubscript{$\pm$1.9}&76.2\textsubscript{$\pm$0.9} &\textbf{76.6}\\ 
\bottomrule
\end{tabular}
\caption{Standard CL Benchmark}

\end{table}

In addition, to assess performance under a more standardized continual learning protocol, we follow the continual learning benchmark introduced in O-LoRA~\citep{wang2023olora}. We strictly follow the prescribed experimental protocol and evaluate three different task orders using the T5 model on five datasets — AG News, Amazon Reviews, Yelp Reviews, DBpedia, and Yahoo Answers — repeating each experiment three times. To ensure a fair comparison with O-LoRA in terms of parameter budget, we train a new adapter for each task while applying \textbf{SCLoRA} under the same parameter configuration. On average, \textbf{SCLoRA} outperforms existing methods in the continual learning setting on average. This further demonstrates that the spectral constraints imposed by \textbf{SCLoRA} effectively mitigate catastrophic forgetting not only in single-task fine-tuning but also in continual learning scenarios.

\section{Additional Studies}\label{app:additional_studies}

 \begin{figure}[h]
    \centering
    \subfigure[SQuADv1.1, $r$=4]{\includegraphics[width=0.49\columnwidth]{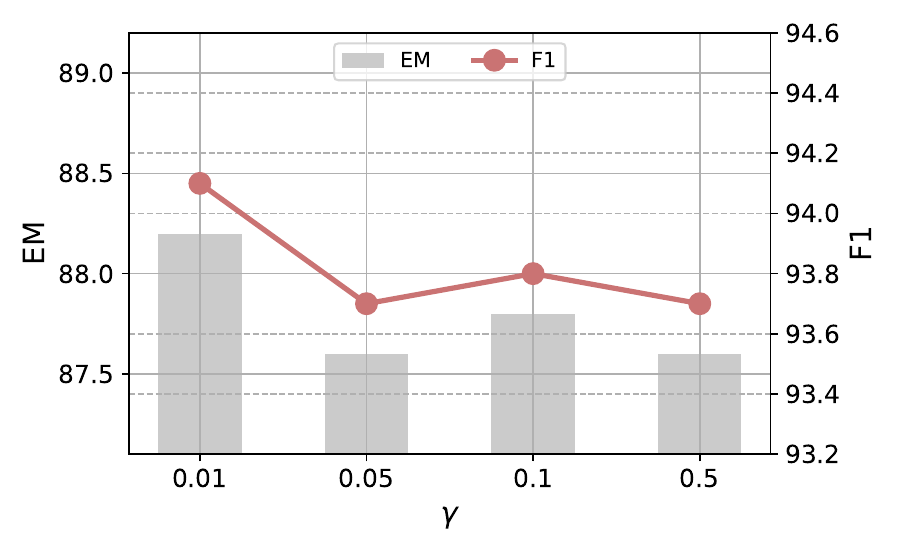}}
    \subfigure[SQuADv2.0, $r$=4]{\includegraphics[width=0.49\columnwidth]{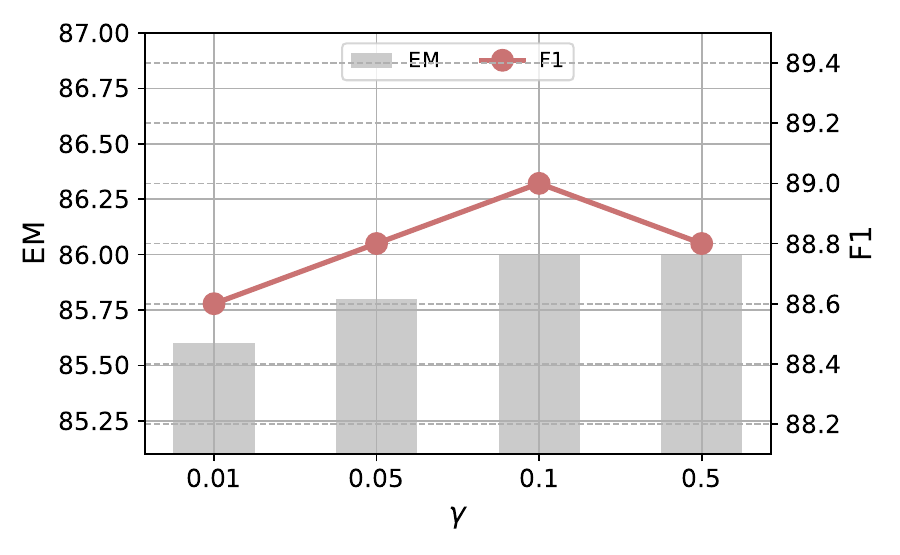}}
    \caption{Sensitivity studies on $\gamma$}
    \label{fig:sens_gamma_sigma}
\end{figure}

\subsection{Sensitivity study on the orthogonal regularization coefficient $\gamma$}\label{app:sens_ortho}
The orthogonal regularization applied to $U$ and $V$ is used to learn the singular values that consists the injected singular components. We conduct a sensitivity study on the orthogonal regularization coefficient $\gamma$ on Question-Answering (QA) tasks by fine-tuning the DeBERTaV3\textsubscript{base} model on the SQuAD v1.1/v2.0 datasets with rank $r=4$. As shown in~\Cref{fig:sens_gamma_sigma}, the optimal value of $\gamma$ varies across datasets, indicating that different tasks prefer different regularization strengths. However, the performance does not degrade sharply outside the optimal range; instead, the model remains largely robust, exhibiting no severe sensitivity to the choice of $\gamma$.

\subsection{Sensitivity on the rank $r$}
To evaluate how the choice of rank affects both downstream performance and catastrophic forgetting, we conduct a sensitivity study by varying the rank $r$. We measure performance using RoBERTa on two representative GLUE tasks—MRPC and STS-B—and assess the corresponding degradation on the BookCorpus pre-training dataset. The results are reported in \Cref{fig:sens_r}. Although the optimal adapter rank and configuration vary across datasets and model sizes, and increasing the rank does not always yield consistent gains, we generally do not observe drastic performance degradation as the rank increases.

\begin{figure}[t]
    \centering
    \subfigure[MRPC]{\includegraphics[width=0.48\columnwidth]{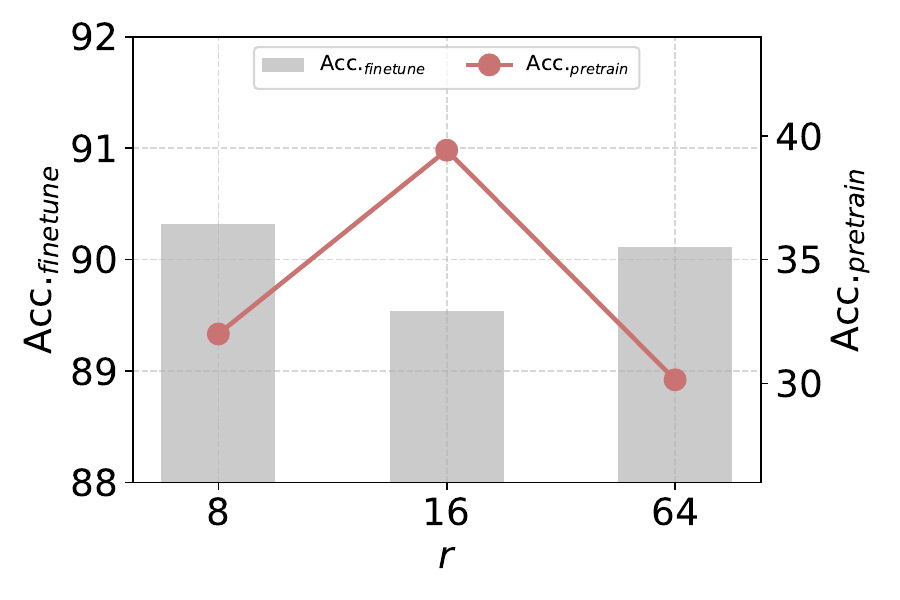}}
    \subfigure[STS-B]{\includegraphics[width=0.48\columnwidth]{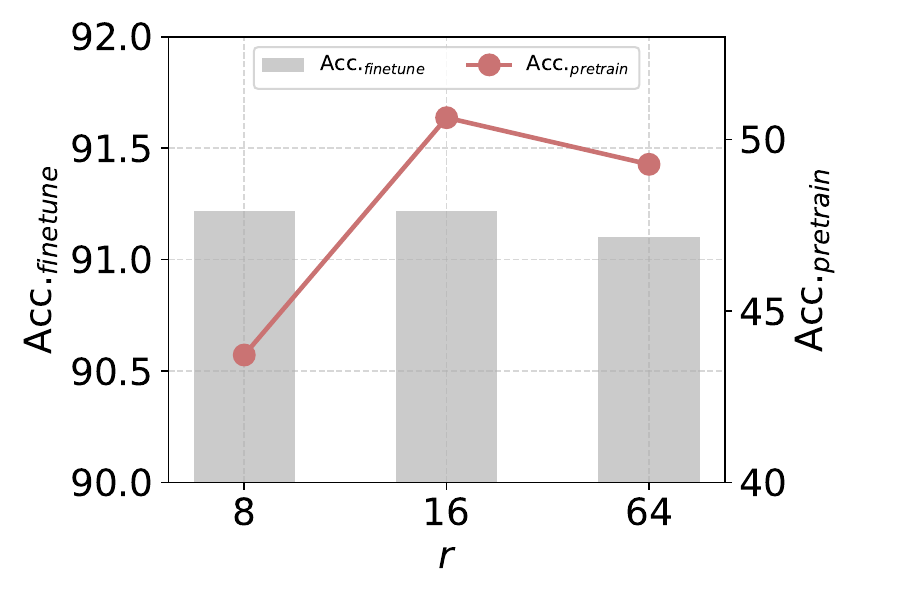}}
    \caption{Sensitivity on $r$}
    \label{fig:sens_r}
\end{figure}

\subsection{Additional sensitivity on $q$}

\begin{figure}[t]
    \centering
    \small
    \begin{subfigure}[SQuADv1.1, $r$=4]{\includegraphics[width=0.48\columnwidth]{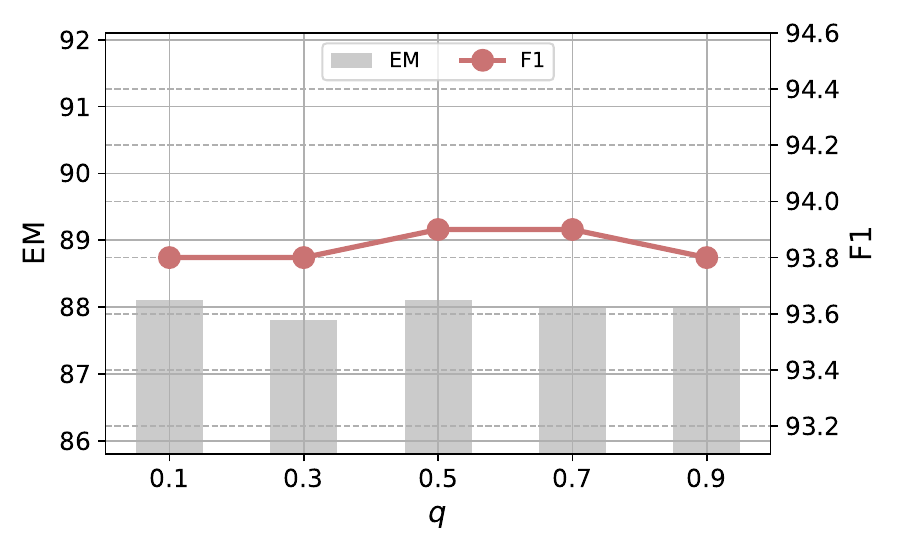}}
    \end{subfigure}
    \begin{subfigure}[SQuADv1.1, $r$=8]{\includegraphics[width=0.48\columnwidth]{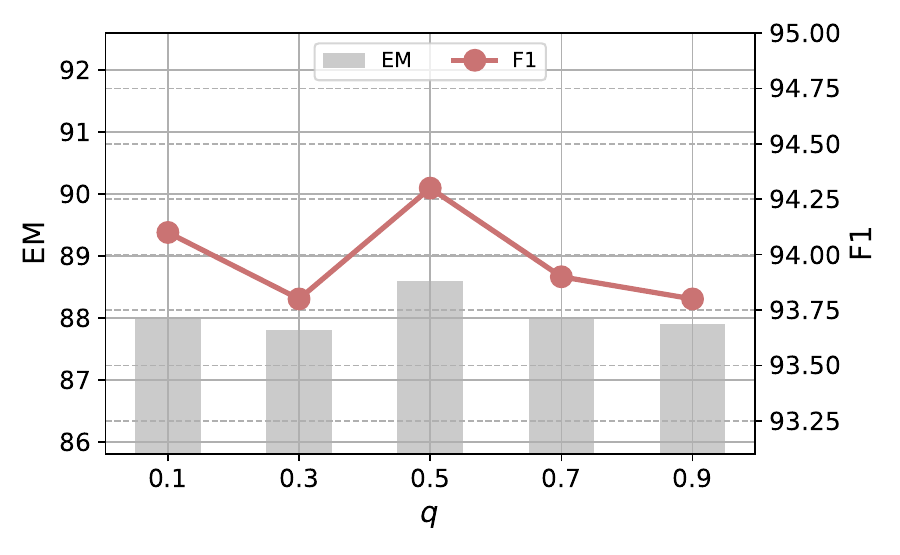}}
    \end{subfigure}
    \begin{subfigure}[SQuADv2.0, $r$=4]{\includegraphics[width=0.48\columnwidth]{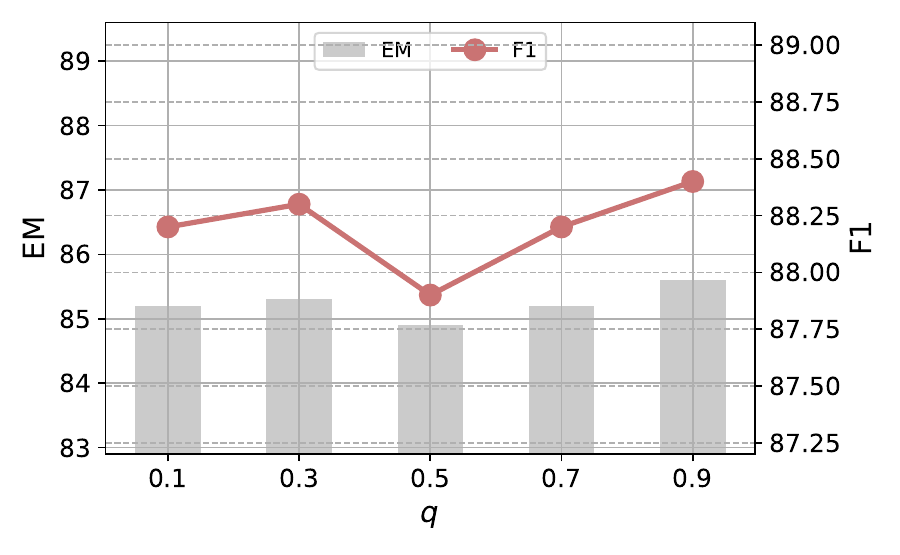}}
    \end{subfigure}
    \begin{subfigure}[SQuADv2.0, $r$=8]{\includegraphics[width=0.48\columnwidth]{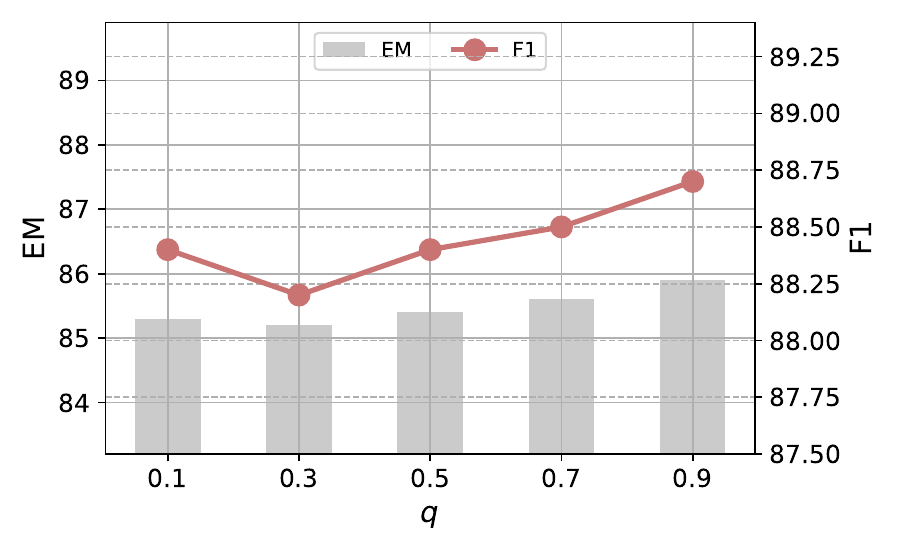}}
    \end{subfigure}
    \caption{Sensitivity on $q$}
    \label{fig:sens_quantile_squad}
\end{figure}

Followed by \Cref{sec:sens_sigma_bar}, we conduct an additional sensitivity study on the $q$ on Question-Answering (QA) tasks. \Cref{fig:sens_quantile_squad} illustrates that the fine-tuning performance does not change drastically as the quantile value varies. This indicates that using quantile-based bounds does not significantly degrade the performance of downstream tasks.

\subsection{Ablation on integrating \textbf{SCLoRA} with SVD-based LoRA}
Existing SVD-based LoRA methods either apply SVD during the initialization stage of fine-tuning or use parameterized SVD to adjust the rank. However, these approaches generally do not explicitly control the spectral dynamics during the fine-tuning process. To examine whether SCLoRA can complement such methods by addressing this aspect, we conduct an ablation study in which our framework is integrated into existing baseline models.
Specifically, for MiLoRA, we apply orthogonality regularization and singular value clipping after constructing the adapter using minor components, and for AdaLoRA, we incorporate singular value clipping within each allocated rank during the adaptive rank allocation procedure. All experiments use the best hyperparameters for both the backbone models and SCLoRA, and the results are reported in \Cref{tab:abla_integrated}.
Across downstream tasks, we observe consistent and modest performance improvements when combining SCLoRA with existing SVD-based methods (MiLoRA and AdaLoRA), even without additional hyperparameter tuning. These results suggest that spectral clipping may facilitate more effective adaptation in spectral regions that are not fully exploited in prior approaches.

\begin{table*}[t]
\centering
\small
\setlength{\tabcolsep}{7pt}
\begin{tabular}{l | cccc} \toprule
% Method & MNLI  & SST-2  & CoLA  & QQP  & QNLI  & RTE  & MRPC  & STS-B & Avg. \\ \midrule
    Method  &   CoLA &  QNLI  & MRPC & STS-B \\ \midrule
    LoRA  & 64.49\textsubscript{$\pm$0.64} & 92.73\textsubscript{$\pm$0.11} & 89.05\textsubscript{$\pm$0.12} & 90.87\textsubscript{$\pm$0.08}  \\
    \midrule
    AdaLoRA   & 62.41\textsubscript{$\pm$0.66} &92.67\textsubscript{$\pm$0.12}   & 89.38\textsubscript{$\pm$0.57} & 91.08\textsubscript{$\pm$0.11}  \\
    AdaLoRA+\textbf{SCLoRA}  & 64.56\textsubscript{$\pm$0.12} & 92.79\textsubscript{$\pm$0.17}  & 89.54\textsubscript{$\pm$0.99} & 91.08\textsubscript{$\pm$0.30} \\
    % AdaLoRA   & 61.37\textsubscript{$\pm$1.01} & 92.54\textsubscript{$\pm$0.08} & 81.11\textsubscript{$\pm$0.85} & 89.05\textsubscript{$\pm$0.31} & 90.62\textsubscript{$\pm$0.11}  \\
    % AdaLoRA+\textbf{SCLoRA}     & 64.31\textsubscript{$\pm$0.55} & 92.99\textsubscript{$\pm$0.16} & 82.67\textsubscript{$\pm$0.96} & 89.79\textsubscript{$\pm$0.79} & 91.02\textsubscript{$\pm$0.11} \\
    \midrule
    MiLoRA     & 64.31\textsubscript{$\pm$0.97} &92.96\textsubscript{$\pm$0.21} & 89.30\textsubscript{$\pm$0.12} & 90.96\textsubscript{$\pm$0.05}  \\
    MiLoRA+\textbf{SCLoRA}  & 64.57\textsubscript{$\pm$0.87} & 93.01\textsubscript{$\pm$0.11} & 89.30\textsubscript{$\pm$0.28} & 91.00\textsubscript{$\pm$0.04}  \\
    \midrule
    \rowcolor{blue!10}\textbf{SCLoRA}   & 64.79\textsubscript{$\pm$0.20} & 93.09\textsubscript{$\pm$0.14} & 90.32\textsubscript{$\pm$0.86} & 91.22\textsubscript{$\pm$0.05} \\
                
        \bottomrule
\end{tabular}
\caption{Comparison of various methods on GLUE tasks with different random seeds.}
\label{tab:abla_integrated}

\end{table*}

\subsection{Gram-Schmidt orthogonalization on parameterized singular vectors}

\begin{table*}[t]
\centering
\small
\setlength{\tabcolsep}{6.5pt}
\resizebox{0.95\textwidth}{!}{%
\begin{tabular}{lcccccccc}
\toprule
 & \multicolumn{2}{c}{CoLA} & \multicolumn{2}{c}{RTE} & \multicolumn{2}{c}{MRPC} & \multicolumn{2}{c}{STS-B} \\
 \cmidrule(lr){2-3}
 \cmidrule(lr){4-5}
 \cmidrule(lr){6-7}
 \cmidrule(lr){8-9}
 & Acc. & Min/Epoch  & Acc. &  Min/Epoch & Acc. & Min/Epoch  & Acc. &  Min/Epoch \\
\midrule
Gram-Schmidt ortho. & 57.31\textsubscript{$\pm$0.85} & 3.6 & 67.27\textsubscript{$\pm$1.23} & 1.1 & 87.58\textsubscript{$\pm$ 1.02} & 2.0 & 89.88\textsubscript{$\pm$0.19} & 2.6 \\
 \rowcolor{blue!10}\textbf{SCLoRA} & \textbf{64.79}\textsubscript{$\pm $0.20} & \textbf{2.9} & \textbf{83.15}\textsubscript{$\pm$ 0.17} &  \textbf{0.8} & \textbf{90.32}\textsubscript{$\pm$ 0.86} & \textbf{1.3} & \textbf{91.22}\textsubscript{$\pm$0.05} & \textbf{2.1} \\
\bottomrule
\end{tabular}
}
\caption{Comparison of Gram-Schmidt orthogonalization and orthogonal regularization across various datasets. `Min/Epoch' denotes the number of minutes required per epoch.}

\label{tab:gs_ortho}
\end{table*}

Orthogonal regularization is a widely used method for implementing parameterized SVD in LoRA~\citep{zhang2023adalora,cao2024sorsa, zhang2024lora2}. Unlike orthogonal regularization, Gram-Schmidt orthogonalization—one of the orthogonalization methods—produces strictly orthogonal vectors. However, since it refines each subsequent vector based on the previously orthogonalized ones, it has the disadvantage of being difficult to parallelize compared to the regularization approach. Furthermore, as shown in~\Cref{fig:ortho_loss}, the orthogonal error rapidly decreases to below $10^{-2}$ in the early stages of fine-tuning, indicating that the singular vectors can be sufficiently trained. We conduct an ablation study with orthogonal regularization by replacing it with Gram-Schmidt-based orthogonalization. We measured the performance and training speed in~\Cref{tab:gs_ortho}.

\begin{table}[t]
\centering
\small
\begin{tabular}{lcccc}
\toprule
Method & CoLA & RTE & MRPC & STS-B \\
\midrule
Descending & 63.03 & 83.03 & 89.87 & 90.84 \\
Ascending & 63.40 & 80.99 & 88.64 & 91.06 \\
\rowcolor{blue!10}\textbf{SCLoRA}  & \textbf{64.79} & \textbf{83.15} & \textbf{90.32} & \textbf{91.22} \\
\bottomrule
\end{tabular}
\caption{Ablation study on layerwise $q$}
\label{tab:abla_layerwise_quantile}
\end{table}

\subsection{Ablation on layer-wise quantile}
While the upper-bound quantile $q$ of the singular values can be applied globally across all layers, it is also possible to assign different quantiles to different groups of layers. To investigate this flexibility, we perform an ablation study in which we group the $L$ layers and assign distinct quantiles to each group.
Specifically, in the `Descending' setting, the $L$ layers are divided into four equal groups, and the quantile is assigned in decreasing order—from $q=1.0$ to $q=0.25$, which is quartile of pre-trained spectrum for simplicity—as the layer index increases. Conversely, in the `Ascending' setting, the quantile assignment is reversed, increasing from $q=0.25$ to $q=1.0$. \textbf{SCLoRA} applies a single uniform quantile across all layers, which corresponds to our default design.
As shown in~\Cref{tab:abla_layerwise_quantile} the relative performance differences across layer-wise quantile schedules are minor: the choice between Ascending and Descending varies slightly across datasets, and neither configuration leads to severe degradation. Importantly, the Global quantile consistently provides competitive or superior results, achieving the best performance on all four tasks. Considering both stability and simplicity, applying a single quantile remains the most effective and robust strategy.

\subsection{Ablation on automatic $\bar{\sigma}$}\label{app:automatic_sigma}
Since \textbf{SCLoRA} uses a fixed quantile $q$, the corresponding spectral bound $\bar{\sigma}$ is also fixed. Instead of using this fixed value, one may alternatively treat $\bar{\sigma}$ as a learnable parameter so that it can be automatically learned during training. We conduct an ablation study where $\bar{\sigma}$ is made learnable, and the results are reported in \Cref{tab:abla_learnable_sigma}.
The results show that using a learnable $\bar{\sigma}$ does not lead to a substantial degradation in performance. However, the fixed $\bar{\sigma}$ generally yields more stable and consistently better results across tasks. In addition, since a learnable $\bar{\sigma}$ introduces extra parameters and potential computational overhead, we choose to use a fixed constant value rather than adopting the automatically learned variant.

\begin{table}[t]
\centering
\small
\begin{tabular}{lcccc}
\toprule
Method & CoLA & RTE & MRPC & STS-B \\
\midrule
Learnable $\bar{\sigma}$ & 64.21 & 82.43 & 90.28 & 91.10 \\
\rowcolor{blue!10}\textbf{SCLoRA} & \textbf{64.79} & \textbf{83.15} & \textbf{90.32} & \textbf{91.22} \\
\bottomrule
\end{tabular}
\caption{Ablation study on automatic $\bar{\sigma}$}
\label{tab:abla_learnable_sigma}
\end{table}

\section{Orthogonal Regularization on Parameterized Singular Vectors}\label{app:orthogonality}
\begin{figure}[t]
    \centering
    \subfigure[Orthogonal loss of $U$]{\includegraphics[width=0.49\columnwidth]{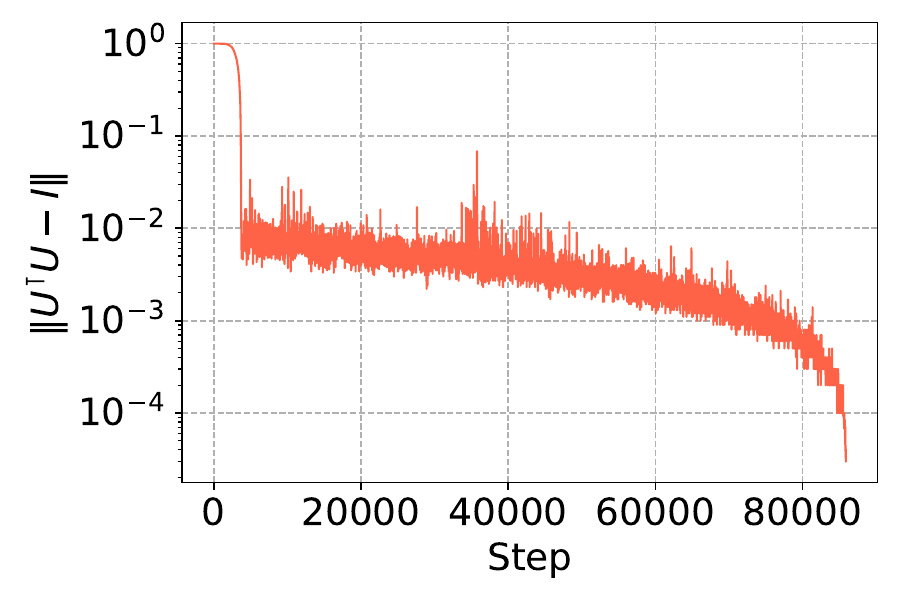}} 
    \subfigure[Orthogonal loss of $V^\top$]{\includegraphics[width=0.49\columnwidth]{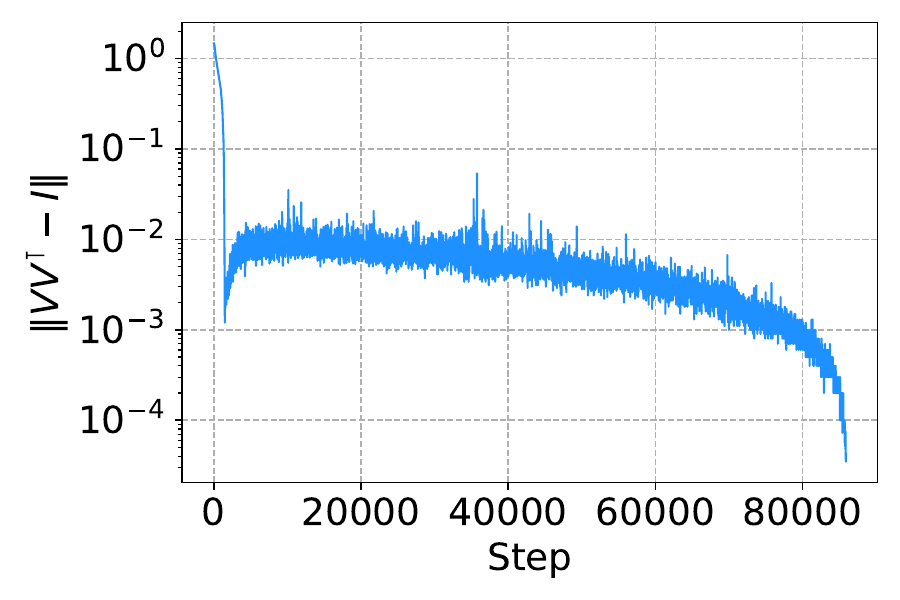}}
    \caption{The orthogonal loss curves of parameterized singular vectors $U$ and $V$ when fine-tuning RoBERTa\textsubscript{base} on STS-B dataset}
    \label{fig:ortho_loss}
\end{figure}

As shown in~\Cref{fig:sens_quantile}, when the $q$ and corresponding $\bar{\sigma}$ exceeds a certain level, the fine-tuning performance remains stable, and the performance on the pre-training task is largely recovered. In addition, we further treat the $\bar{\sigma}$ as a learnable variable and perform auto-tuning.
As presented in~\Cref{tab:abla_learnable_sigma}, making the quantile learnable maintains stable overall performance, and no performance degradation is observed. However, under our experimental setup, the learnable $\bar{\sigma}$ setting does not yield clear additional gains over the fixed setting. For simplicity and consistency, we therefore adopt the fixed $\bar{\sigma}$ configuration as the default choice.

\begin{table*}[t]
\centering
\small
\setlength{\tabcolsep}{4.7pt}
\begin{tabular}{l | cccccccc} \toprule
Method & MNLI  & SST-2  & CoLA  & QQP  & QNLI  & RTE  & MRPC  & STS-B  \\ \midrule
LoRA & 105.9/24.9 & 18.2/24.9 & 2.3/24.9 & 98.1/24.9 & 28.3/24.9 & 0.7/24.9 & 1.0/12.5 & 1.6/24.9 \\
PiSSA & 106.2/24.9 & 18.1/24.9 & 2.3/24.9 & 98.1/24.9 & 28.2/24.9 & 0.7/24.9 & 1.0/12.5 & 1.5/24.9 \\
AdaLoRA & 123.4/25.6 & 21.1/25.6 & 2.7/25.6 & 114.4/25.6 & 33.1/25.6 & 0.8/25.6 & 1.3/13.1 & 1.8/25.6 \\
MiLoRA & 106.0/24.9 & 18.1/24.9 & 2.3/24.9 & 98.1/24.9 & 28.2/24.9 & 0.7/24.9 & 1.0/12.5 & 1.5/24.9 \\
\textbf{SCLoRA} & 128.9/25.2 & 22.1/25.2 & 2.8/25.2 & 119.4/25.2 & 34.3/25.2 & 0.8/25.2 & 1.3/12.8 & 1.9/25.2 \\ \bottomrule
\end{tabular}

\caption{Comparison of training time (min per epoch) and peak GPU usage (GB) with RoBERTa\textsubscript{base}}
\label{tab:runtime_gpu}
\end{table*}

\begin{table*}[t]
\centering
\small
\setlength{\tabcolsep}{4.7pt}
\begin{tabular}{l | cccccccc} \toprule
Method & MNLI  & SST-2  & CoLA  & QQP  & QNLI  & RTE  & MRPC  & STS-B  \\ \midrule
LoRA & 0.85/0.3 & 0.08/0.3 & 0.10/0.3 & 3.46/0.3 & 0.47/0.3 & 0.03/0.3 & 0.05/0.3 & 0.14/0.3 \\
PiSSA & 0.84/0.3 & 0.08/0.3 & 0.09/0.3 & 3.47/0.3 & 0.47/0.3 & 0.03/0.3 & 0.04/0.3 & 0.13/0.3\\
AdaLoRA & 1.32/0.3 & 0.13/0.3 & 0.15/0.3 & 5.43/0.3 & 0.74/0.3 & 0.05/0.3 & 0.07/0.3 & 0.21/0.3 \\
MiLoRA & 0.84/0.3 & 0.08/0.3 & 0.09/0.3 & 3.47/0.3 & 0.47/0.3 & 0.03/0.3 & 0.04/0.3 & 0.13/0.3  \\
\textbf{SCLoRA} & 1.01/0.3 & 0.10/0.3 & 0.12/0.3 & 4.15/0.3 & 0.57/0.3 & 0.04/0.3 & 0.05/0.3 & 0.16/0.3 \\
\bottomrule
\end{tabular}
\caption{Comparison of inference time (min per epoch) and peak GPU usage (GB) with RoBERTa\textsubscript{base}}
\label{tab:inferencetime_gpu}

\end{table*}

\begin{table}[t]
\centering
\small
\setlength{\tabcolsep}{4.7pt}
\begin{tabular}{l | cc} \toprule
Method & LLaMA-7B & 	LLaMA2-7B \\
\midrule
LoRA & 	74.7/0.53 & 77.6/0.54 \\
\textbf{SCLoRA}	 & 79.4/0.51  & 81.0/0.52 \\
\bottomrule
\end{tabular}
\caption{Comparison of training time (min per epoch) and peak GPU usage (GB) with LLaMA family}
\label{tab:runtime_gpu_llama}
\end{table}

\Cref{fig:ortho_loss} shows the orthogonal loss curve of parameter singular vectors $U$ and $V$ of RoBERTa\textsubscript{base} fine-tuned on STS-B dataset. The singular vectors are orthogonally optimized as indicated by the consistent reduction in orthogonal loss.

\section{Empirical Complexity Analysis}\label{app:empirical_complexity}
% \section{Comparison of Computational Complexity}\label{app:complexity}
% \paragraph{Increased number of trainable parameters.}
% For the hidden dimension $d$ and the rank of adapter $r$, conventional LoRA learns $2dr$ parameters per layer. \textbf{SCLoRA} introduces only an additional $r$ parameter to the existing LoRA, resulting in a total of $2dr+r$ parameters per layer. For instance, when $r=2$ as shown in~\Cref{tab:parmeter_deberta_r2}, conventional LoRA learns 331,776 parameters, whereas \textbf{SCLoRA} learns a total of 331,920 parameters—the additional parameters are only 144.
% \paragraph{Increased computational cost.}
% In training stage, conventional LoRA has a time complexity of $\mathcal{O}(d^2r)$, \textbf{SCLoRA} has $\mathcal{O}(d^2r+dr+r^3)$, which is similar to existing parameterized SVD-based LoRA. However, since $r \ll d$ in LoRA (For instance, when $r = 32$, it corresponds to only about $0.78\%$ of the full dimension ($d = 4096$) in LLaMA-2.), the additional training cost is negligible, and we further report the empirical training time in the following paragraph.
% In inference stage, conventional LoRA involves computations with complexity $\mathcal{O}(d^2r)$ per layer, whereas \textbf{SCLoRA} includes the process of multiplying the parameterized singular value with the parameterized singular vector, resulting in an additional complexity of $\mathcal{O}(d^2r+dr)$, which is comparable. Note that the explicit SVD is performed only once during the initialization stage and is not required during fine-tuning.
% \paragraph{Empirical complexity analysis.}
We conduct additional experiments on empirical time complexity and GPU usage during fine-tuning and inference phase. \Cref{tab:runtime_gpu} summarizes the empirical training time (min per epoch) and peak GPU usage (GB) of RoBERTa\textsubscript{base} fine-tuned on GLUE tasks. The GPU usage showed a very slight increase compared to the original LoRA, and the additional runtime occurs in \textbf{SCLoRA} and AdaLoRA. This increase arises from the orthogonal regularization of singular vectors generated by parameterized SVD. However, fine-tuning typically requires fewer epochs, and considering the improved performance and the ability to retain pre-trained knowledge compared to the baseline model, this increase is negligible. \Cref{tab:inferencetime_gpu} reports the  empirical inference time (min per epoch) and peak GPU usage (GB) of RoBERTa\textsubscript{base} fine-tuned on GLUE tasks. \textbf{SCLoRA} exhibits a marginal increase in inference latency and peak GPU memory relative to vanilla LoRA; nonetheless, both metrics remain lower than those of AdaLoRA and are practically comparable.
In large models such as LLaMA, the primary computational complexity stems from the high dimensionality of the base model. For instance, we empirically measure the computational complexity in LLaMA-7B and LLaMA2-7B, and we report ``average accuracy on commonsense reasoning / the steps per second during fine-tuning'' in~\Cref{tab:runtime_gpu_llama}. Since $r \ll d$, the training process runs at approximately 96\% of the original speed, which is not a significant difference.

\end{document}